\documentclass[letterpaper]{article}
 
\renewcommand{\cite}[1]{\citet{#1}}

\usepackage{amsmath, amsthm, amssymb, epsfig, sectsty, graphicx, float, url}
\usepackage{bm}

\usepackage{xcolor}
\usepackage{natbib}
\usepackage{hyperref}
\usepackage[margin=1.1in]{geometry}

\usepackage[utf8]{inputenc} 
\usepackage[T1]{fontenc}    
\usepackage{hyperref}       
\usepackage{url}            
\usepackage{booktabs}       
\usepackage{amsfonts}       
\usepackage{nicefrac}       
\usepackage{microtype}      
\usepackage{enumitem}       
\usepackage{subcaption}


\setlength{\parindent}{0pt}
\setlength{\parskip}{3mm plus 1mm minus 1mm}

\newcommand{\tr}{^{\top}}
\newcommand{\gammab}{\gamma_{\sf bw}}
\newcommand{\gammabr}{\gamma_{\sf bw,r}}

\hypersetup{colorlinks=true,citecolor=darkgray,linkcolor=black,urlcolor=black}

\newcommand{\R}{\mathbb{R}}

\newcommand{\Z}{\mathbb{Z}}
\newcommand{\A}{\mathcal{A}}

\newcommand{\N}{\mathbb{N}}

\newcommand{\X}{\mathcal{S}}

\newcommand{\mcM}{\mathcal{M}}

\newcommand{\E}{\mathbb{E}}
\newcommand{\U}{\mathcal{U}}

\newcommand{\mcU}{\mathcal{U}}

\newcommand{\opt}{^{\star}}

\theoremstyle{plain}
\newtheorem{theorem}{Theorem}[section]
\newtheorem{lemma}[theorem]{Lemma}
\newtheorem{example}[theorem]{Example}

\newtheorem{corollary}[theorem]{Corollary}

\newtheorem{proposition}[theorem]{Proposition}
\newtheorem{assumption}[theorem]{Assumption}
\theoremstyle{definition}
\newtheorem{definition}[theorem]{Definition}
\newtheorem{remark}[theorem]{Remark}

\newcommand{\Pib}{\Pi\opt_{\sf bw}}
\newcommand{\Pibr}{\Pi\opt_{\sf bw,r}}

\numberwithin{equation}{section}
\numberwithin{theorem}{section}

\title{Reducing Blackwell and Average Optimality to Discounted MDPs via the Blackwell Discount Factor}
\author{Julien Grand-Cl{\'e}ment \\
  Information System and Operations Management Department, HEC Paris \\
  \texttt{grand-clement@hec.fr}
  \and Marek Petrik \\
  Department of Computer Science, University of New Hampshire \\
\texttt{mpetrik@cs.unh.edu}}

\begin{document}
\maketitle
\begin{abstract}
We introduce the {\em Blackwell discount factor} for Markov Decision Processes (MDPs). Classical objectives for MDPs include discounted, average, and Blackwell optimality. Many existing approaches to computing average-optimal policies solve for discounted optimal policies with a discount factor close to $1$, but they only work under strong or hard-to-verify assumptions such as ergodicity or weakly communicating MDPs. In this paper, we show that when the discount factor is larger than the {\em Blackwell discount factor} $\gammab$, all discounted optimal policies become Blackwell- and average-optimal, and we derive a general upper bound on $\gammab$. The upper bound on $\gammab$ provides the first reduction from average and Blackwell optimality to discounted optimality, {\em without any assumptions}, and new polynomial-time algorithms for average- and Blackwell-optimal policies. Our work brings new ideas from the study of polynomials and algebraic numbers to the analysis of MDPs. Our results also apply to robust MDPs, enabling the first algorithms to compute robust Blackwell-optimal policies.
\end{abstract}
\section{Introduction}\label{sec:introduction}
Markov Decision Processes (MDPs) provide a widely-used framework for modeling sequential decision-making problems~\citep{puterman2014markov}. In a (finite) MDP, the decision maker repeatedly interacts with an environment characterized by a finite set of states and a finite set of available actions. The decision maker follows a {\em policy} that prescribes an action at a state at every period. An instantaneous reward is obtained at every period, depending on the current state-action pair, and the system transitions to the next state at the next period. MDPs provide the underlying model for the applications of reinforcement learning (RL), ranging from healthcare~\citep{gottesman2019guidelines} to game solving~\citep{mnih2013playing} and finance~\citep{deng2016deep}.

There are several optimality criteria that measure a decision maker's performance in an MDP. In {\em discounted optimality}, the decision maker optimizes the discounted return, defined as the sum of the instantaneous rewards over the infinite horizon, where future rewards are discounted with a {\em discount factor} $\gamma \in [0,1)$. In {\em average optimality}, the decision maker optimizes the average return, defined as the average of the instantaneous rewards obtained over the infinite horizon. The average return ignores any return gathered in finite time, i.e., it does not reflect the transient performance of a policy and it only focuses on the steady-state behavior. 

Perhaps the most selective optimality criterion in MDPs is {\em Blackwell optimality}~\citep{puterman2014markov}. A policy is Blackwell-optimal if it optimizes the discounted return simultaneously for all discount factors sufficiently close to $1$. Since a discount factor close to $1$ can be interpreted as a preference for rewards obtained in later periods, Blackwell-optimal policies are also average-optimal. However, average-optimal policies need not be Blackwell-optimal. As such, Blackwell optimality can be a useful criterion in environments with no natural, or known, discount factor. Also, any algorithm that computes a Blackwell-optimal policy also immediately computes an average-optimal policy. This is one of the reasons why better understanding the Blackwell optimality criterion is mentioned as {\em ``one of the pressing questions in RL''} in the list of open research problems from a recent survey on RL for average reward optimality~\citep{dewanto2020average}.

Average-optimal policies can be computed efficiently via linear programming (section 9.3,~\cite{puterman2014markov}). However, virtually all of the recent algorithms for computing average-optimal policies require strong assumptions on the underlying Markov chains associated with the policies in the MDP instance, such as ergodicity~\citep{wang2017primal}, the unichain and aperiodicity properties~\citep{schneckenreither2020average}, weakly communicating MDPs~\citep{wang2022near}, or assumptions on the mixing time associated with any deterministic policies~\citep{jin2020efficiently,jin2021towards}. These assumptions are motivated by technical considerations (e.g., ensuring that the average reward is uniform across all states) and can be restrictive in practice~\citep{puterman2014markov} and NP-hard to verify, such as the unichain property~\citep{tsitsiklis2007np}.

For computing Blackwell-optimal policies, the situation is quite complex: existing methods for computing Blackwell-optimal policies rely on linear programming over the field of Laurent series (power series including negative coefficients)~\citep{smallwood1966optimum,hordijk1985sensitivity}, or on an algorithm based on a nested sequence of optimality equations~\citep{veinott1969discrete,o2017polynomial} which requires to solve multiple linear programs sequentially. The intricacy of these algorithms makes them difficult to use in practice, with no complexity guarantees for the method relying on Laurent series, and no known public implementation for the method based on nested optimality equations.

In summary, existing algorithms that compute average-optimal policies require restrictive assumptions, and algorithms that compute Blackwell-optimal policies are very complicated. This situation is in stark contrast with the vast literature on solving discounted MDPs, where multiple efficient, general, and well-understood methods exist, including value iteration, policy iteration, and linear programming~(chapter 6,~\cite{puterman2014markov}).
This is the starting point of this paper, which aims to develop new algorithms for computing average-optimal and Blackwell-optimal policies through a reduction to discounted MDPs. 

Our {\bf main contributions} can be summarized as follows. 

{\em We introduce the Blackwell discount factor}, which is a discount factor $\gammab \in [0,1)$ such that any discounted optimal policy for a larger discount factor is also Blackwell-optimal. In other words, discounted optimality for $\gamma > \gammab$ is \emph{sufficient} for Blackwell optimality. This is important because knowing $\gammab$ makes it straightforward to compute Blackwell- and average-optimal policies: solving a discounted MDP with a discount factor of $\gamma \in (\gammab,1)$ returns a Blackwell-optimal policy, and fast general algorithms exist for solving discounted MDPs. 
In contrast, prior work has focused on {\em necessary} condition for Blackwell optimality and it has often been overlooked that even if a Blackwell-optimal policy remains discounted optimal for large enough discount factors, other policies may be discounted optimal but may not be Blackwell-optimal.
The classical approach to Blackwell optimality plays an important role in theoretical analysis but, as we argue, cannot be used to compute Blackwell-optimal policies with simple algorithms. As our first main contribution, we show that $\gammab$ always exists for finite MDPs.

{\em Upper bound the Blackwell discount factor.} As our second main contribution, we provide a strict upper bound on $\gammab$ given an MDP instance with rational entries, i.e., assuming that the instantaneous rewards and the transition probabilities of the MDP are rational numbers. Solving a discounted MDP with a discount factor larger or equal than our strict upper bound
returns a Blackwell-optimal policy. Crucially, our strict upper bound does not require any assumptions on the underlying structure of the MDP, which is a significant improvement on existing literature. 
Interestingly, the construction of our upper bound relies on novel techniques for analyzing MDPs. We interpret $\gammab \in [0,1)$ as the root of a polynomial equation $p(\gamma)=0$ in $\gamma$, show $p(1)=0$, and use a lower bound ${\sf sep}(p)$ on the distance between any two roots of a polynomial $p$, known as the {\em separation of algebraic numbers}. This shows that $\gammab < 1 - {\sf sep}(p)$, where ${\sf sep}(p)$ depends on the MDP instance. Since Blackwell optimality implies average optimality, we also obtain the first reduction from average optimality to discounted optimality, {\em without any assumption} on the MDP structure.

{\em Blackwell discount factor for robust MDPs.} 
We consider the case of robust reinforcement learning where the transition probabilities are unknown and, instead, belong to an uncertainty set. As our third main contribution, we show that the robust Blackwell discount factor $\gammabr$ exists for popular models of uncertainty, such as sa-rectangular robust MDPs with polyhedral uncertainty~\citep{iyengar2005robust,goyal2022robust}. For this setting, we generalize our upper bound on $\gammab$ for MDPs to an upper bound on $\gammabr$ for robust MDPs. Since robust MDPs with discounted optimality can be solved via value iteration and policy iteration, we provide the first algorithms to compute Blackwell-optimal policies for robust MDPs.

We conclude this section with a discussion on {\bf related works}. Several papers study conditions under which it is possible to compute an average-optimal policy by computing a discounted optimal policy for sufficiently large discount factors. To the best of our knowledge, all existing results require strong assumptions to obtain such a reduction. The earliest attempt in this direction can be traced back to \cite{ross1968non}, which assumes that all transition probabilities are bounded from below by $\epsilon>0$. Recent extensions of this result assume bounded times of first returns~\citep{akian2013policy,huang2016complexity}, or the related condition that the MDP is weakly-communicating~\citep{wang2022near}. Closer to our work, 
\cite{friedmann2011exponential,zwick1996complexity}  and \cite{perotto2018tuning} obtain a reduction from average optimality to discounted optimality, but their results require that the transition probabilities are deterministic. Other recent reductions require some assumptions on the mixing times of the Markov chains induced by deterministic policies~\citep{jin2021towards}. \cite{boone2022identification} propose a sampling algorithm to learn a Blackwell-optimal policy, in a special case in which it reduces to bias optimality. Under the unichain assumption, \cite{wang2023robust} show the existence of Blackwell-optimal policies for sa-rectangular robust MDPs~\citep{iyengar2005robust}, which is connected to the existence results in \cite{tewari2007bounded} and \cite{goyal2022robust}. In contrast to the existing literature, we do not need any assumption on the underlying MDP to obtain our reduction from Blackwell optimality and average optimality to discounted optimality.
\section{Preliminaries on MDPs}\label{sec:markov-decision-process}
In this section, we introduce the Markov Decision Process (MDP) framework and optimality criteria related to our work. An MDP instance is characterized by a tuple $\mcM = \left(\X,\A,\bm{r},\bm{P}\right)$, where $\X$ is a finite set of states and $\A$ is a finite set of actions. The instantaneous rewards are denoted by $\bm{r} \in \R^{\X \times \A}$ and the transition probabilities are denoted by $\bm{P} \in \left(\Delta(\X)\right)^{\X \times \A}$, where $\Delta(\X)$ is the simplex over $\X$.

At any time period $t$, the decision maker is in a state $s_{t} \in \X$, chooses an action $a_{t} \in \A$, obtains an instantaneous reward $r_{s_{t}a_{t}} \in \R$, and transitions to state $s_{t+1}$ with probability $P_{s_{t}a_{t}s_{t+1}} \in [0,1]$. A {\em deterministic stationary} policy $\pi\colon \X \rightarrow \A$ assigns an action to each state. Because there exists an optimal deterministic stationary policy for all the criteria considered in this paper~\citep{puterman2014markov}, we simply refer to them as \emph{policies} and denote them as $\Pi = \A^{\X}$.

A policy $\pi \in \Pi$ induces a vector of expected instantaneous reward $\bm{r}_{\pi} \in \R^{\X}$, defined as  $ r_{\pi,s} = r_{s\pi(s)}, \forall \; s \in \X$, as well as a Markov chain over $\X$, evolving via a transition matrix $\bm{P}_{\pi} \in \R^{\X \times \X}$, defined as
$P_{\pi,ss'} = P_{s\pi(s)s'}, \forall \; s,s' \in \X.$

\paragraph{Optimality criteria.}
Given a discount factor $\gamma \in [0,1)$ and a policy $\pi \in \Pi$, the {\em value function} $\bm{v}^{\pi}_{\gamma} \in \R^{\X}$ represents the discounted value obtained starting from each state:
\begin{equation}\label{eq:discounted-value-function}
v_{\gamma,s}^{\pi} = \E^{\pi,\bm{P}} \left[ \sum_{t=0}^{+\infty} \gamma^t r_{s_{t},a_{t}} \; \Big\vert \; s_{0} = s\right], \forall \; s \in \X.
\end{equation}
We start with the definition of discounted optimality, which is the most popular optimality criterion in RL. 
\begin{definition}
Given $\gamma \in [0,1)$, a policy $\pi \in \Pi$ is $\gamma$-discounted optimal if $v_{\gamma,s}^{\pi} \geq v_{\gamma,s}^{\pi'}, \forall \; \pi' \in \Pi, \forall \; s \in \X.$ We call $\Pi\opt_{\gamma} \subset \Pi$ the set of $\gamma$-discounted optimal policies. 
\end{definition}
The discount factor $\gamma \in [0,1)$ represents the preference of the decision maker for current rewards compared to rewards obtained in the later periods. The difficulty of choosing the discount factor $\gamma$ for a specific RL application is well recognized~\citep{tang2021taylor}. In some applications, it is reasonable to choose values of $\gamma$ close to $1$, e.g., in financial applications~\citep{deng2016deep}, in healthcare applications~\citep{neumann2016cost,garcia2021interpretable} or when solving games using reinforcement learning algorithms~\citep{brockman2016openai}. In other applications, $\gamma$ is merely treated as a parameter introduced artificially for algorithmic purposes, e.g., for controlling the variance of the policy gradient estimates~\citep{baxter2001infinite}, or for ensuring convergence of the learning algorithms. In particular, a discounted optimal policy can be computed efficiently with value iteration, policy iteration, and linear programming~\citep{puterman2014markov}. Notably, these algorithms do not require any assumptions on the MDP instance $\mcM$.

Another fundamental optimality criterion is {\em average optimality}. Let us define the average reward $\bm{g}^{\pi} \in \R^{\X}$ of a policy $\pi \in \Pi$ as
\[
  g_{s}^{\pi}  = \lim_{T \rightarrow + \infty} \frac{1}{T+1}\E^{\pi,\bm{P}} \left[ \sum_{t=0}^{T}  r_{s_{t},a_{t}} \; \Big\vert \; s_{0} = s\right], \forall \; s \in \X.
\]
A policy $\pi$ is average-optimal if $\bm{g}^{\pi} \geq \bm{g}^{\pi'}, \forall \; \pi' \in \Pi.$ 
Average optimality has been extensively studied in the RL literature, as it alleviates the introduction of a potentially artificial discount factor. Classical algorithms include relative value iteration~\citep{jalali1990distributed,yang2016efficient,dong2019q}, and gradient-based methods~\citep{bhatnagar2007incremental,iwaki2019implicit}.
We refer the reader to \cite{dewanto2020average} for an extensive survey on RL algorithms for computing average-optimal policies. 

Despite its natural interpretation, several technical complications arise from considering average optimality instead of discounted optimality.
In all generality, the average reward $\bm{g}^{\pi}$ of a policy is not even a continuous function of the policy $\pi$ (e.g., chapter 4, \cite{feinberg2012handbook}). This can make gradient-based methods inefficient, since a small change in the policy may result in drastic changes in the average reward. Additionally, the Bellman operator associated with the average optimality criterion is not a contraction and may have multiple fixed points. These complications can be circumvented by assuming several structural properties on the MDP instance $\mcM$, such as bounded times of first returns and weakly-communicating MDPs~\citep{akian2013policy,wang2022near}. Some of these assumptions may be hard to verify in a simulation environment where only samples are available, or NP-hard to verify even when the MDP instance is fully known, as is the case for the unichain assumption~\citep{tsitsiklis2007np}. One of our goals in this paper is to provide a method to compute average-optimal policies via solving discounted MDPs. We will do so via the notion of {\em Blackwell optimality}.
\section{Classical Blackwell optimality}\label{sec:blackwell-optimality}

In this section, we describe the classical definition of Blackwell optimality in MDPs and summarize its main limitations. Section~\ref{sec:blk-definition} gives this definition of a Blackwell-optimal policy and outlines the proof of its existence. This proof will serve as a building block of our main result in Section~\ref{sec:blackwell-discount-factor}. We highlight the main limitations of the existing definition of Blackwell optimality in Section~\ref{sec:structural-properties}.

\subsection{Existing definition and algorithms}\label{sec:blk-definition}
We now give the classical definition of Blackwell optimality, which provides an interesting connection between discounted optimality and average optimality.
\begin{definition}\label{def:blackwell-optimality}
A policy $\pi$ is {\em Blackwell-optimal} if there exists $\gamma \in [0,1)$, such that 
$ \pi \in \Pi\opt_{\gamma'}, \; \forall \, \gamma' \in [\gamma,1).$
We call $\Pib$ the set of Blackwell-optimal policies.
\end{definition}
In short, a Blackwell-optimal policy is $\gamma$-discounted optimal for all discount factors $\gamma$ sufficiently close to $1$. This notion dates back to \cite{blackwell1962discrete} and it has become popular in the field of reinforcement learning, mainly due to its connection to average optimality~\citep{dewanto2021examining}. Blackwell optimality bridges the gap between the different optimality criteria: Blackwell optimality is defined in terms of discounted optimality, yet Blackwell-optimal policies are average-optimal (theorem 10.1.5, \cite{puterman2014markov}). Therefore, any advances in computing Blackwell-optimal policies transfer to advances in computing average-optimal policies. 
\paragraph{Existence of a Blackwell-optimal policy.}
A Blackwell-optimal policy is guaranteed to exist for finite MDPs with $|\X|<\infty$ and $|\A|<\infty$.
\begin{theorem}[\cite{blackwell1962discrete}]\label{th:blackwell-opt-mdp}
In any finite MDP, there exists at least one Blackwell-optimal policy: $\Pib \neq \emptyset$.
\end{theorem}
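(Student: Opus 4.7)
The plan is to exploit the finiteness of the deterministic stationary policy set $\Pi = \A^{\X}$ together with the analyticity (in fact, rationality) of the value function $\gamma \mapsto v_{\gamma,s}^{\pi}$ on $[0,1)$, then argue that near $\gamma = 1$ the relative ranking of all policies in all states must stabilize.

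First I would write, for each policy $\pi \in \Pi$, the closed-form expression $\bm{v}_{\gamma}^{\pi} = (I - \gamma \bm{P}_{\pi})^{-1} \bm{r}_{\pi}$. By Cramer's rule, each entry $v_{\gamma,s}^{\pi}$ is a ratio of polynomials in $\gamma$, with denominator $\det(I - \gamma \bm{P}_{\pi})$. Since the eigenvalues of $\bm{P}_{\pi}$ lie in the closed unit disk, this denominator does not vanish on $[0,1)$, so $v_{\gamma,s}^{\pi}$ is a well-defined rational function of $\gamma$ on $[0,1)$. Consequently, for every pair $\pi, \pi' \in \Pi$ and every state $s \in \X$, the difference
\[
f_{\pi,\pi',s}(\gamma) \;=\; v_{\gamma,s}^{\pi} - v_{\gamma,s}^{\pi'}
\]
is a rational function on $[0,1)$: it either vanishes identically or has only finitely many zeros in $[0,1)$.

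Next I would collect all the zeros of the non-identically-zero $f_{\pi,\pi',s}$ across all (finitely many) triples $(\pi,\pi',s)$. The union of these zero sets is finite, so there exists $\gamma^{\star} \in [0,1)$ such that none of the $f_{\pi,\pi',s}$ changes sign on $(\gamma^{\star},1)$. In particular, for every pair $(\pi,\pi')$ and every state $s$, either $v_{\gamma,s}^{\pi} \geq v_{\gamma,s}^{\pi'}$ holds for all $\gamma \in (\gamma^{\star},1)$, or the reverse inequality holds throughout. Fixing any $\gamma_{0} \in (\gamma^{\star},1)$, I would take a $\gamma_{0}$-discounted optimal policy $\pi^{\star} \in \Pi^{\star}_{\gamma_{0}}$ (which exists because $\Pi$ is finite). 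The constancy of signs then forces $v_{\gamma,s}^{\pi^{\star}} \geq v_{\gamma,s}^{\pi'}$ for every $\pi' \in \Pi$, every $s \in \X$, and every $\gamma \in (\gamma^{\star},1)$; hence $\pi^{\star}$ is $\gamma$-discounted optimal for every $\gamma \in [\gamma^{\star},1)$ (closing the left endpoint by continuity, or replacing $\gamma^{\star}$ by any slightly larger value), proving $\pi^{\star} \in \Pib$.

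The only non-routine point is the reduction to rational functions and the observation that the denominator $\det(I - \gamma \bm{P}_{\pi})$ does not vanish on $[0,1)$; the rest is a clean finiteness argument. I expect the main subtlety (relevant for the sequel of the paper) to be not the existence itself but the fact that the argument is non-constructive: it yields some $\gamma^{\star}$ bounded away from $1$, but no effective estimate, which is precisely what the introduction of $\gammab$ and its explicit upper bound via separation of algebraic numbers is designed to remedy.
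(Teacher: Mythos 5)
Your proposal is correct and follows essentially the same route as the paper's proof: Cramer's rule gives rationality of $\gamma \mapsto v_{\gamma,s}^{\pi}$, the finitely many zeros of the non-identically-zero differences $v_{\gamma,s}^{\pi}-v_{\gamma,s}^{\pi'}$ over all triples $(\pi,\pi',s)$ yield a threshold (the paper's $\bar{\gamma}$, your $\gamma^{\star}$), and sign constancy beyond it promotes any discounted optimal policy to a Blackwell-optimal one. Your closing remark about the argument being non-constructive correctly anticipates the paper's motivation for the explicit bound on $\gammab$.
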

We now highlight the main steps of a proof of Theorem~\ref{th:blackwell-opt-mdp} based on section 10.1.1 in \cite{puterman2014markov}. Summarizing this proof is important because it is not well-known and serves as a building block for our results.

{\em Step 1.} The first step of the proof of Theorem~\ref{th:blackwell-opt-mdp} is to show that for any two policies $\pi,\pi' \in \Pi$ and any state $s \in \X$, the function $\gamma \mapsto v_{\gamma,s}^{\pi}-v_{\gamma,s}^{\pi'}$ only has finitely many zeros in $[0,1)$.
 This is a consequence of the following lemma.
\begin{lemma}\label{lem:value-function-rational}
For $\pi \in \Pi$ and $s \in \X$, $\gamma \mapsto v_{\gamma,s}^{\pi}$ is a {\em rational function} on $[0,1)$, i.e., it is the ratio of two polynomials. 
\end{lemma}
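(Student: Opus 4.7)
The plan is to start from the Bellman equation for the value function of a fixed policy $\pi$, rewrite it in matrix form, invert, and read off the rational structure entrywise using Cramer's rule.

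First I would observe that for any policy $\pi$ and any $\gamma \in [0,1)$, the value function $\bm{v}^{\pi}_{\gamma}$ is the unique solution to $\bm{v}^{\pi}_{\gamma} = \bm{r}_{\pi} + \gamma \bm{P}_{\pi}\bm{v}^{\pi}_{\gamma}$. Uniqueness follows because the spectral radius of $\bm{P}_{\pi}$ is at most $1$ (stochastic matrix), so the spectral radius of $\gamma \bm{P}_{\pi}$ is at most $\gamma < 1$, so $\bm{I}-\gamma \bm{P}_{\pi}$ is nonsingular on $[0,1)$. Hence
\[
\bm{v}^{\pi}_{\gamma} = (\bm{I}-\gamma \bm{P}_{\pi})^{-1}\bm{r}_{\pi}.
\]

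Next I would apply Cramer's rule (or equivalently the adjugate formula) to express each entry of $(\bm{I}-\gamma \bm{P}_{\pi})^{-1}$ as $C_{ij}(\gamma)/\det(\bm{I}-\gamma \bm{P}_{\pi})$, where $C_{ij}(\gamma)$ is (up to sign) a cofactor of $\bm{I}-\gamma \bm{P}_{\pi}$. Since the entries of $\bm{I}-\gamma \bm{P}_{\pi}$ are affine in $\gamma$ with coefficients in $\R$, both the cofactors $C_{ij}(\gamma)$ and the determinant $\det(\bm{I}-\gamma \bm{P}_{\pi})$ are polynomials in $\gamma$ of degree at most $|\X|$. Therefore, for every state $s \in \X$,
\[
v^{\pi}_{\gamma,s} = \sum_{s' \in \X} \frac{C_{ss'}(\gamma)}{\det(\bm{I}-\gamma \bm{P}_{\pi})}\, r_{\pi,s'} = \frac{N^{\pi}_{s}(\gamma)}{\det(\bm{I}-\gamma \bm{P}_{\pi})},
\]
where $N^{\pi}_{s}(\gamma) := \sum_{s' \in \X} C_{ss'}(\gamma)\, r_{\pi,s'}$ is a polynomial in $\gamma$. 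This writes $v^{\pi}_{\gamma,s}$ as the ratio of two polynomials in $\gamma$, which is the desired conclusion. The denominator is nonzero on $[0,1)$ by the invertibility argument above, so the rational expression is well-defined throughout $[0,1)$.

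There is no significant obstacle here; the only care needed is ensuring that $\bm{I}-\gamma \bm{P}_{\pi}$ is invertible on the whole interval $[0,1)$ so that the quotient representation is valid pointwise and not merely in an algebraic sense. This follows from the stochasticity of $\bm{P}_{\pi}$, which caps its spectral radius at $1$.
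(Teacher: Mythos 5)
Your proof is correct and follows essentially the same route as the paper: write the Bellman equation $\bm{v}^{\pi}_{\gamma} = \bm{r}_{\pi} + \gamma\bm{P}_{\pi}\bm{v}^{\pi}_{\gamma}$ as a linear system with matrix $\bm{I}-\gamma\bm{P}_{\pi}$ and apply Cramer's rule; your cofactor-sum numerator $\sum_{s'} C_{ss'}(\gamma)\,r_{\pi,s'}$ is exactly the column-replacement determinant $\det(\bm{A}_{s})$ the paper uses, expanded along that column. Your explicit spectral-radius argument for invertibility on all of $[0,1)$ is a detail the paper leaves implicit, but the argument is the same.
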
 
Lemma~\ref{lem:value-function-rational} follows from the Bellman equation for the value function $\bm{v}^{\pi}$: $\bm{v}^{\pi} = \bm{r}_{\pi} + \gamma \bm{P}_{\pi}\bm{v}^{\pi}$.
Therefore, $\bm{v}^{\pi}$ is the unique solution to the equation $\bm{Ax}=\bm{b}$, for $\bm{b}=\bm{r}_{\pi}$ and $\bm{A} = \bm{I} -\gamma \bm{P}_{\pi}$.
Lemma~\ref{lem:value-function-rational} then follows directly from Cramer's rule for the solution of a system of linear equations: since $\bm{A}$ is invertible, then $\bm{Ax}=\bm{b}$ has a unique solution $\bm{x}$, which satisfies $x_{s} = \det(\bm{A}_{s})/\det(\bm{A}), \forall \; s \in \X$, with $\det(\cdot)$ the determinant of a matrix and $\bm{A}_{s}$ the matrix formed by replacing the $s$-th column of $\bm{A}$ by the vector $\bm{b}$. A consequence of Lemma~\ref{lem:value-function-rational} is the following.
\begin{corollary}\label{cor:rational-difference-value}
For any two policies $\pi,\pi'$ and any state $s \in \X$, the function $\gamma \mapsto v_{\gamma,s}^{\pi}-v_{\gamma,s}^{\pi'}$ is a rational function. 
\end{corollary}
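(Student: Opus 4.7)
The plan is to derive this corollary directly from Lemma~\ref{lem:value-function-rational} by applying it twice (once to $\pi$ and once to $\pi'$) and then invoking the fact that the set of rational functions in a single variable $\gamma$ is closed under subtraction (indeed, it forms a field). Concretely, I would write $v_{\gamma,s}^{\pi} = p_{1}(\gamma)/q_{1}(\gamma)$ and $v_{\gamma,s}^{\pi'} = p_{2}(\gamma)/q_{2}(\gamma)$, where the four polynomials are produced by Cramer's rule applied to the two Bellman linear systems $(\bm{I} - \gamma \bm{P}_{\pi})\bm{v}^{\pi} = \bm{r}_{\pi}$ and $(\bm{I} - \gamma \bm{P}_{\pi'})\bm{v}^{\pi'} = \bm{r}_{\pi'}$, exactly as in the proof of Lemma~\ref{lem:value-function-rational}.

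I would then place the two fractions over a common denominator to get
\[
v_{\gamma,s}^{\pi} - v_{\gamma,s}^{\pi'} = \frac{p_{1}(\gamma)\, q_{2}(\gamma) - p_{2}(\gamma)\, q_{1}(\gamma)}{q_{1}(\gamma)\, q_{2}(\gamma)},
\]
which is manifestly a ratio of two polynomials in $\gamma$. The one small point to verify is that the common denominator $q_{1} q_{2}$ is not identically zero, so that the right-hand side defines a legitimate rational function on $[0,1)$. This is immediate: by the Cramer's-rule construction, $q_{i}(\gamma) = \det(\bm{I} - \gamma \bm{P}_{\pi_{i}})$, so $q_{i}(0) = \det(\bm{I}) = 1 \neq 0$ for $i \in \{1,2\}$, hence $q_{1} q_{2}$ is a nonzero polynomial.

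There is no real obstacle in this argument; the corollary is essentially a one-line consequence of the lemma combined with the elementary algebraic fact that differences of rational functions remain rational. The only thing to be slightly careful about is not asserting that the denominator is nonvanishing at every point of $[0,1)$ (it can have roots in $[0,1)$ a priori, though in fact $\det(\bm{I} - \gamma \bm{P}_{\pi}) \neq 0$ on $[0,1)$ since the spectral radius of $\bm{P}_{\pi}$ is $1$), but only that it is not the zero polynomial, which is all that is needed for the expression to qualify as a rational function.
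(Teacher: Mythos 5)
Your proposal is correct and follows essentially the same route as the paper: apply Lemma~\ref{lem:value-function-rational} (via Cramer's rule) to each policy separately and combine the two fractions over the common denominator $\det(\bm{I}-\gamma\bm{P}_{\pi})\det(\bm{I}-\gamma\bm{P}_{\pi'})$, which is exactly the computation the paper carries out in its appendix. Your extra remark that the denominator is in fact nonvanishing on all of $[0,1)$ (since the spectral radius of a stochastic matrix is $1$) matches the paper's Lemma~\ref{lem:sign-denominator} and is a welcome, if not strictly necessary, addition.
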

Since $\gamma \mapsto v_{\gamma,s}^{\pi}-v_{\gamma,s}^{\pi'}$ is rational, its zeros are the zeros of a polynomial. Therefore,$\gamma \mapsto v_{\gamma,s}^{\pi}-v_{\gamma,s}^{\pi'}$ is either identically equal to $0$, or it has only has finitely many roots in $[0,1)$.

{\em Step 2.} We can now conclude the proof of Theorem~\ref{th:blackwell-opt-mdp} as follows. For any pair of policies $\pi,\pi' \in \Pi$ and any state $s \in \X$ such that $\gamma \mapsto v_{\gamma,s}^{\pi}-v_{\gamma,s}^{\pi'}$ 
 is not identically equal to $0$, we write $\gamma(\pi,\pi',s) \in [0,1)$ for the largest zero of the map $\gamma \mapsto v_{\gamma,s}^{\pi}-v_{\gamma,s}^{\pi'}$ in $[0,1)$:
 \begin{equation}\label{eq:gamma-pi-pi-prime-s}
     \gamma(\pi,\pi',s) = \max \{\gamma \in [0,1) | v_{\gamma,s}^{\pi}-v_{\gamma,s}^{\pi'}=0\}.
 \end{equation}
 We let $\gamma(\pi,\pi',s)=0$ if $\gamma \mapsto v_{\gamma,s}^{\pi}-v_{\gamma,s}^{\pi'}$ is identically equal to $0$ on the entire interval $[0,1)$.  We now let
 \begin{equation}\label{eq:bar-gamma}
 \bar{\gamma} = \max_{\pi,\pi' \in \Pi,s \in \X} \gamma(\pi,\pi',s).
 \end{equation}
We have $\bar{\gamma}<1$ since there is a finite number of (stationary, deterministic) policies and a finite number of states. Let $\pi$ be $\gamma$-discounted optimal for a certain $\gamma > \bar{\gamma}$. We have, for any $ s \in \X, v^{\pi}_{\gamma,s} \geq v^{\pi'}_{\gamma,s}, \forall \; \pi' \in \Pi.$
By the definition of $\bar{\gamma}$, the map $\gamma \mapsto v^{\pi}_{\gamma,s} - v^{\pi'}_{\gamma,s}$ cannot change a sign on $[\bar{\gamma},1)$ (because it cannot be equal to $0$), for any policy $\pi' \in \Pi$ and any state $s \in \X$, i.e., we have $v^{\pi}_{\gamma',s} \geq v^{\pi'}_{\gamma',s}, \forall \; \pi' \in \Pi, \forall \; \gamma' \in (\gamma,1).$
This shows that $\pi$ remains $\gamma'$-discounted optimal for all $\gamma' > \gamma$, and, therefore, $\pi$ is Blackwell-optimal.
\paragraph{Existing algorithms.}
To the best of our knowledge, there are only two algorithms to compute a Blackwell-optimal policy.
The first algorithm~\citep{smallwood1966optimum,hordijk1985sensitivity} formulates MDPs with varying discount factors as linear programs over the field of power series with potentially negative coefficients, known as Laurent series.
This generalizes the observation that MDPs with a fixed discount factor can be formulated as linear programs over $\R^{\X}$. An implementation of the simplex method for solving linear programs over power series explores the entire interval $[0,1)$ and computes the subintervals of $[0,1)$ where an optimal policy can be chosen constant (as a function of $\gamma$). It returns a Blackwell-optimal policy in a finite number of operations. However, there are no complexity guarantees for this algorithm. The second algorithm is based on a set of $(|\X|+1)$-nested equations indexed by $n=-1,...,|\X|-1$, which need to be solved sequentially by solving three linear programs at each stage $n$~\citep{o2017polynomial}. This gives a polynomial-time algorithm for computing Blackwell-optimal policies, requiring solving $3(|\X|+1)$ linear programs of dimension $O\left(|\X|\right)$. A simpler version of this algorithm is in section 10.3.4 in \cite{puterman2014markov}, but only finite convergence is proved. To the best of our knowledge, there are no implementations of these algorithms available.
\subsection{Limitations of existing  approaches}\label{sec:structural-properties}
We now emphasize the limitations of the classical definition and algorithms for computing Blackwell-optimal policies.

First, Definition~\ref{def:blackwell-optimality} only leads to algorithms that are significantly more involved than the method for solving discounted MDPs. In particular, the two existing algorithms for computing Blackwell-optimal policies require the handling of complex objects, e.g., the field of power series and nested optimality equations involving multiple subproblems that need to be solved sequentially. The intricacy of both algorithms makes them difficult to implement. In Section~\ref{sec:blackwell-discount-factor}, we introduce the notion of the {\em Blackwell discount factor}, which provides a reduction of Blackwell optimality to discounted optimality, leading to algorithms for computing Blackwell-optimal policies that are conceptually much simpler.

Second, Definition~\ref{def:blackwell-optimality} implicitly introduces, for each Blackwell-optimal policy $\pi \in \Pib$, a discount factor $\gamma(\pi) \in [0,1)$, defined as the smallest discount factor after which $\pi$ remains discounted optimal:
\begin{equation}\label{eq:definition-bar-gamma-pi}
\gamma(\pi) = \min \{ \gamma \in [0,1) \; |  \; \pi \in \Pi\opt_{\gamma'}, \forall \; \gamma' \in [\gamma,1)\}.
\end{equation}
However, this discount factor $\gamma(\pi) \in [0,1)$ does not provide a method to compute a Blackwell-optimal policy, as the following proposition shows.
\begin{proposition}\label{prop:limit-def}
  There exists an MDP instance $\mcM$, a Blackwell-optimal policy $\pi\in \Pib$, and discount factors $\gamma_1, \gamma_2 \in [0,1)$ with $\gamma_1 < \gamma(\pi) < \gamma_2$ such that:
  \begin{enumerate}[nosep]
    \item the policy $\pi$ is $\gamma_1$-discounted optimal, and
    \item there exists $\pi' \neq \pi$ that is $\gamma_2$-discounted optimal and \emph{not} Blackwell-optimal.
  \end{enumerate}
\end{proposition}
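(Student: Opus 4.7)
The plan is to prove the proposition by constructing an explicit counterexample MDP. The key observation, already used in the proof of Theorem~\ref{th:blackwell-opt-mdp}, is that by Lemma~\ref{lem:value-function-rational} and Corollary~\ref{cor:rational-difference-value}, each difference $\gamma \mapsto v^{\pi}_{\gamma,s} - v^{\pi'}_{\gamma,s}$ is a rational function of $\gamma$, so there is substantial freedom to choose where the zeros of these differences lie in $[0,1)$. I would exploit this freedom to build a small MDP with three distinguished deterministic stationary policies $\pi, \pi', \pi''$ such that: (i) $\pi$ is Blackwell-optimal; (ii) $\pi''$ strictly dominates $\pi$ on some subinterval of $(0, \gamma(\pi))$, while $\pi$ dominates $\pi''$ on a smaller initial subinterval containing some $\gamma_1 < \gamma(\pi)$; and (iii) $\pi'$ equals $\pi$ in value at some $\gamma_2 \in (\gamma(\pi), 1)$ but is strictly dominated by $\pi$ on $(\gamma_2, 1)$.

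Concretely, I would work with an MDP on two states with two actions per state, giving at most four deterministic stationary policies. Using Cramer's rule, as in the proof of Lemma~\ref{lem:value-function-rational}, each value function $v^{\pi}_{\gamma,s}$ is a rational function whose numerator and denominator are polynomials in $\gamma$ of degree at most two. Writing the differences $v^{\pi}_{\gamma,s} - v^{\pi''}_{\gamma,s}$ and $v^{\pi}_{\gamma,s} - v^{\pi'}_{\gamma,s}$ explicitly, I would tune the rewards and transition probabilities so that the numerator of the first difference has two roots in $[0,1)$, with the larger one equal to $\gamma(\pi)$, and so that the numerator of the second difference has exactly one root at $\gamma_2 \in (\gamma(\pi), 1)$ with the difference strictly positive on $(\gamma_2, 1)$. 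Choosing $\gamma_1$ strictly below the smaller root of the first difference establishes part~(1), while the root $\gamma_2$ simultaneously witnesses $\pi'$ being $\gamma_2$-discounted optimal (since it ties with $\pi$, which is itself optimal there because $\gamma_2 > \gamma(\pi)$) and certifies that $\pi'$ is not Blackwell-optimal.

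The main obstacle will be producing concrete numerical parameters that realize all of these zero-crossings and sign patterns simultaneously while keeping $\pi$ globally Blackwell-optimal, i.e., beating every other policy (including the fourth one) on some left neighborhood of $1$. To keep the algebra tractable, I would start from a deterministic transition structure, so that denominators of the form $\det(\bm{I} - \gamma \bm{P}_{\pi})$ simplify to products of factors like $1-\gamma$ or $1-\gamma^{k}$, and the numerators become polynomials whose roots can be prescribed by tuning rewards. Verification then reduces to evaluating a handful of polynomial inequalities on $[0,1)$: checking strict positivity of $v^{\pi}_{\gamma,s} - v^{\tilde{\pi}}_{\gamma,s}$ for $\gamma$ near $1$ and every $\tilde{\pi}\neq \pi$ (Blackwell-optimality of $\pi$), checking the sign of $v^{\pi}_{\gamma_1,s} - v^{\tilde{\pi}}_{\gamma_1,s}$ for each $\tilde{\pi}$ at the chosen $\gamma_1$, and checking the tie and sign pattern for $\pi'$ at and beyond $\gamma_2$.
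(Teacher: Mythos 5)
Your proposal is correct and follows essentially the same route as the paper: the paper's proof (Example~\ref{ex:counter-example-0}) builds exactly the three-policy configuration you describe, using a single decision state with deterministic reward chains so that the value functions are low-degree polynomials with prescribable roots ($v^{a_1}_\gamma=1$, $v^{a_2}_\gamma=6\gamma-8\gamma^2$ crossing $1$ at $\gamma\in\{1/4,1/2\}$, and $v^{a_3}_\gamma=(8/3)\gamma-(16/9)\gamma^2$ peaking at $1$ when $\gamma=3/4$). Your roles $\pi,\pi'',\pi'$ correspond precisely to $a_1,a_2,a_3$ with $\gamma(\pi)=1/2$, $\gamma_1\le 1/4$, and $\gamma_2=3/4$.
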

Proposition~\ref{prop:limit-def} shows that solving a $\gamma$-discounted MDP for discount factor $\gamma > \gamma(\pi)$ does not compute a Blackwell-optimal policy: the policy $\pi'$ in Proposition~\ref{prop:limit-def} is optimal for $\gamma_2>\gamma(\pi)$ but is not Blackwell- or average-optimal. 
It also shows that $\gamma(\pi)$ is not the smallest discount factor for which $\pi$ is discounted optimal. Overall, Proposition~\ref{prop:limit-def} shows that the discount factor $\gamma(\pi)$, appearing in the classical definition of Blackwell optimality, cannot be exploited to compute a Blackwell-optimal policy.

The proof of Proposition~\ref{prop:limit-def} is based on the next example.
\begin{example}\label{ex:counter-example-0}
We consider the MDP instance from Figure~\ref{fig:example-0}. The decision maker starts in state $0$ and chooses one of three actions $\{a_{1},a_{2},a_{3}\}$; there is no choice in other states, all transitions are deterministic, and the rewards are indicated above the transition arcs. The reward for $a_{1}$ is $1$ and the process transitions to the absorbing state $7$, which gives a reward of $0$. The reward for $a_2$ is $0$, and the process transitions to states $1,2,3$ before reaching the absorbing state $7$. Therefore, the value function $v^{a_{2}}_{\gamma}$ is $v^{a_{2}}_{\gamma}=r_{1} \gamma + r_{2} \gamma^{2}$. Similarly, we have $v^{a_{3}}_{\gamma}=r_{4} \gamma + r_{5} \gamma^{2}$. Meanwhile, the value function $v^{a_{1}}_{\gamma}$ is always equal to $1$.  By choosing $(r_{1},r_{2}) = (6,-8)$ and $(r_{4},r_{5}) = (8/3,-16/9)$, we obtain the value functions represented in Figure \ref{fig:example-1}. In particular, $v_{\gamma}^{a_{2}}$ is the parabola that is equal to $0$ at $\gamma=0$, and equal to $1$ at $\gamma \in \{1/4,1/2\}$, and $v_{\gamma}^{a_{3}}$ is the parabola that is equal to $0$ at $\gamma=0$ and equal to its maximum $1$ at $\gamma = 3/4$. This shows that $a_{1}$ is Blackwell-optimal with $\gamma(a_{1})=1/2$. Additionally, for $\gamma_{1} \in [0,1/4]$, $a_{1}$ is $\gamma_{1}$-discounted optimal. Finally, $a_{3}$ is $\gamma_{2}$-discounted optimal for $\gamma_{2} = 3/4$, but it is not Blackwell-optimal.

\begin{figure}[htp]
\begin{center}
  \begin{subfigure}{0.34\textwidth}
    \centering
\includegraphics[width=1.0\linewidth]{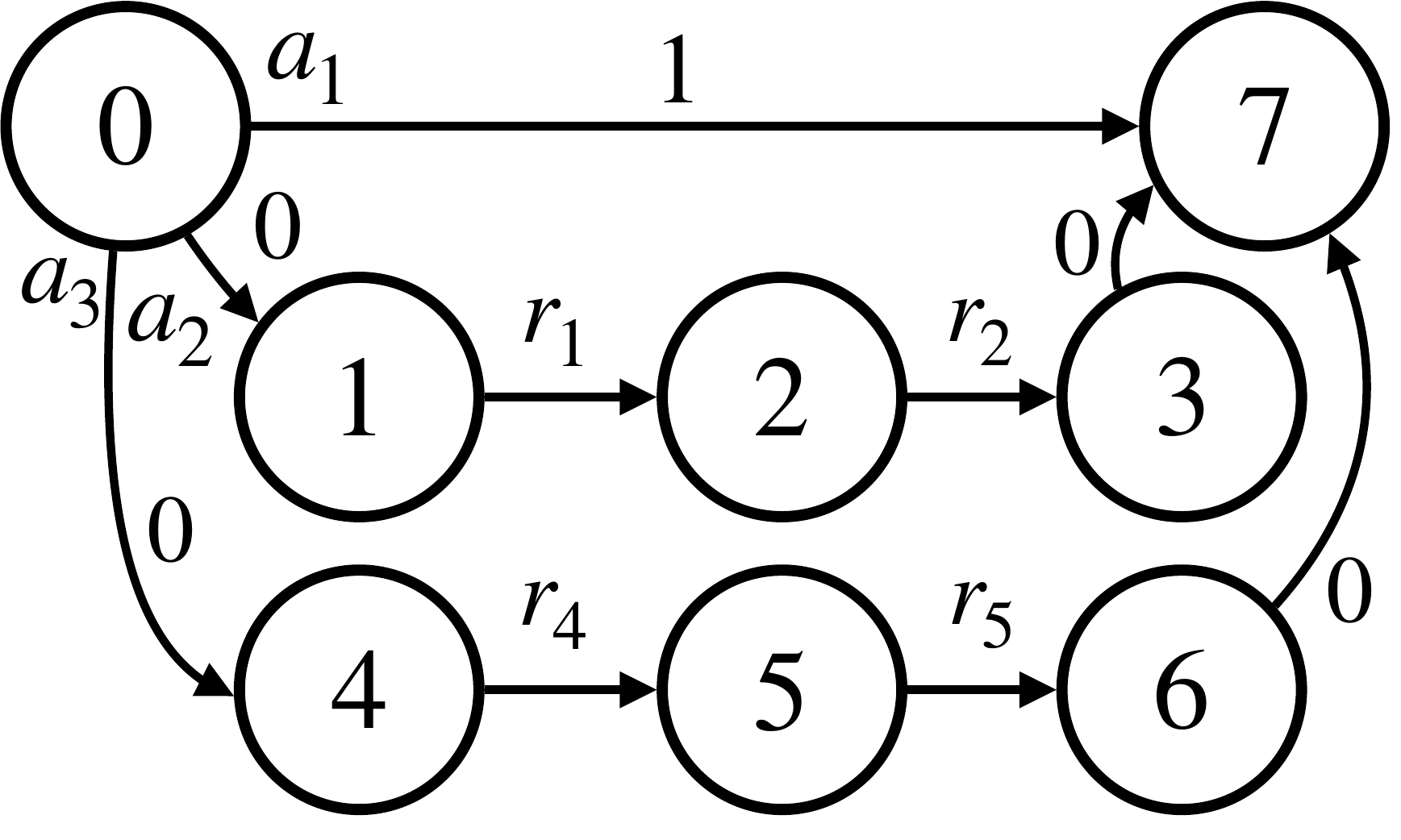}
  \vspace{3em}
\caption{}\label{fig:example-0}
\end{subfigure}
\hspace{3ex}
\begin{subfigure}{0.44\textwidth}
  \centering
  \includegraphics[width=1.0\linewidth]{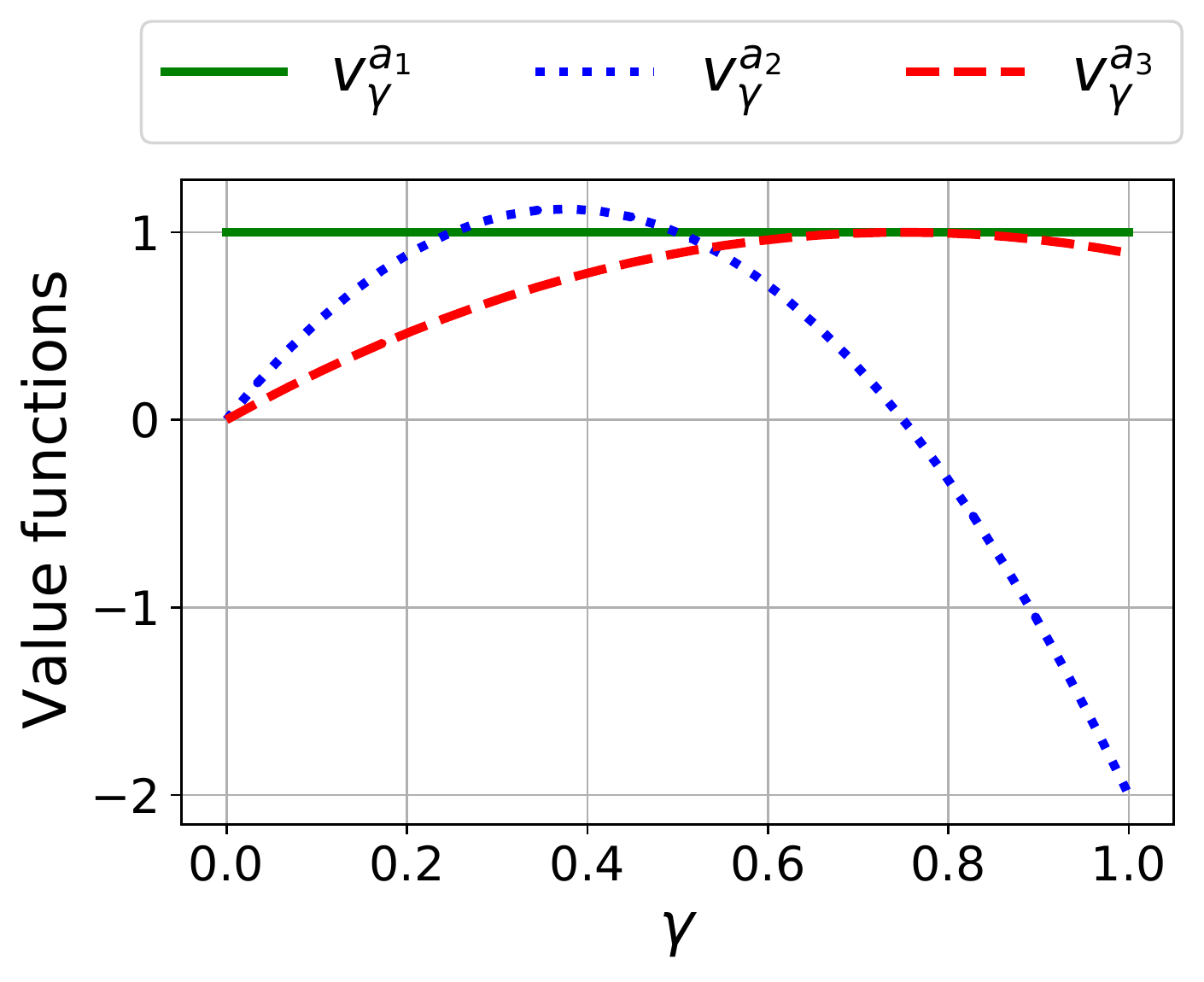}
\caption{}\label{fig:example-1}
\end{subfigure}
\end{center}
\vspace{-1.5em}
\caption{MDP instance for Example~\ref{ex:counter-example-0} (Figure~\ref{fig:example-0}) . There are three actions in state $0$ and the transitions are deterministic. The instantaneous rewards are represented above the transition arcs. The value functions are represented in Figure~\ref{fig:example-1}.}
\end{figure}
\end{example} 

The following theorem further strengthens Proposition~\ref{prop:limit-def} to show that there exists an MDP with only two different policies, but where a Blackwell-optimal policy may be $\gamma$-optimal for $\gamma$ in an {\em arbitrary} number of {\em arbitrary} disjoint subintervals of $[0,1)$.
 \begin{theorem}\label{th:interval-optimal-policies}
For any odd integer $N \in \N$ and any sequence 
$ 0 = \gamma_{0} < \gamma_{1}<...<\gamma_{N-1} < \gamma_{N} = 1,$
there exists an MDP instance $\left(\X,\A,\bm{r},\bm{P}\right)$ with $|\X| = N+1$ and $|\A|=2$, and 
two policies $\pi_{1},\pi_{2}$ such that $\pi_{1}$ is the unique optimal policy on any of the intervals $(\gamma_{2i},\gamma_{2i+1})$ for $i=0,...,(N-1)/2$ and $\pi_{2}$ is the unique optimal policy on $(\gamma_{2i-1},\gamma_{2i})$, for $i=1,...,(N-1)/2$.
\end{theorem}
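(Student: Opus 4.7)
The plan is to construct such an MDP explicitly, extending the ``branching shortcut'' structure of Example~\ref{ex:counter-example-0}. I would take $\mcS = \{0, 1, \ldots, N\}$ with state $N$ absorbing (both actions self-loop with reward $0$). At state $0$, let action $a_1$ transition deterministically to state $1$ with reward $r_0$, and let action $a_2$ transition deterministically to the absorbing state $N$ with reward $r_0'$. At each intermediate state $i \in \{1, \ldots, N-1\}$, both actions transition deterministically to state $i+1$, but action $a_1$ yields reward $\rho_i$ and action $a_2$ yields reward $\rho_i - \epsilon$ for some fixed $\epsilon > 0$; this forces $a_1$ to be strictly optimal at every intermediate state regardless of $\gamma$, since the transitions from those states are independent of the action. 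Define $\pi_1$ as the all-$a_1$ policy and $\pi_2$ as the policy that picks $a_2$ at state $0$ and $a_1$ everywhere else.

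The induced value functions at state $0$ are
\[
v^{\pi_1}_{\gamma,0} = r_0 + \sum_{i=1}^{N-1} \gamma^i \rho_i, \qquad v^{\pi_2}_{\gamma,0} = r_0',
\]
so their difference $q(\gamma) = (r_0 - r_0') + \sum_{i=1}^{N-1} \rho_i \gamma^i$ is a polynomial of degree at most $N-1$ carrying exactly $N$ free real coefficients in the MDP rewards. The key step is then to match $q$ against the target polynomial $p(\gamma) = \prod_{i=1}^{N-1}(\gamma - \gamma_i)$ by setting $r_0 - r_0' = p(0)$ and $\rho_i = [\gamma^i]\, p(\gamma)$ for $i = 1, \ldots, N-1$; this linear system has a unique solution since $p$ has degree exactly $N-1$, and this determines all MDP rewards.

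Finally, I would verify the sign pattern and uniqueness. Since $N$ is odd, $N-1$ is even, so $p$ has leading coefficient $+1$ and $p(0) = \prod_i \gamma_i > 0$; because each $\gamma_i$ is a simple root, the sign of $p$ flips exactly once at each root, giving $p(\gamma) > 0$ on $(\gamma_{2i}, \gamma_{2i+1})$ for $i = 0, \ldots, (N-1)/2$ and $p(\gamma) < 0$ on $(\gamma_{2i-1}, \gamma_{2i})$ for $i = 1, \ldots, (N-1)/2$. The perturbation $\epsilon > 0$ forces every optimal policy to pick $a_1$ at each intermediate state, and the strict sign of $p(\gamma)$ at state $0$ strictly selects between $a_1$ and $a_2$ on each open interval, so $\pi_1$ (resp.\ $\pi_2$) is the unique optimal policy on the first (resp.\ second) family. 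The main subtlety is that the count of free reward parameters ($N$) exactly matches the number of coefficients of a degree-$(N-1)$ polynomial, which is what makes the construction feasible under the tight state budget $|\mcS| = N+1$.
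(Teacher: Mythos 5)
Your construction is correct and essentially the same as the paper's: a single binary choice at the initial state between a constant-value branch and a deterministic chain whose value function is a free degree-$(N-1)$ polynomial in $\gamma$, with rewards chosen so that the difference of the two value functions has simple roots exactly at $\gamma_1,\dots,\gamma_{N-1}$ and the right sign at $0$ (the paper obtains the rewards via Lagrange interpolation, you by expanding $\prod_{i=1}^{N-1}(\gamma-\gamma_i)$ --- an immaterial difference). Your $\epsilon$-perturbation at the intermediate states actually makes the uniqueness claim more rigorous than the paper's own example; the only residual tie is at the absorbing state $N$, where your two actions coincide --- a blemish shared with the paper and trivially fixed by penalizing one action there as well.
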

Theorem~\ref{th:interval-optimal-policies} follows from the following example.
\begin{example}\label{ex:multiple-intervals}
    Consider the following MDP instance, represented in Figure~\ref{fig:example-multiple-intervals}.
The initial state is state $0$, where there are two actions to be chosen, $a_{1}$ or $a_{2}$. Action $a_{1}$ yields an instantaneous reward of $1$ and then the decision maker transitions to the absorbing state $N$, where there is a reward of $0$. Otherwise, choosing action $a_{2}$  yields an instantaneous reward $r_{0}$ and takes the decision maker through a deterministic sequence of states $1,...,N-1$ with rewards $r_{1},...,r_{N-1}$, before transitioning to state $N$. 
For a given $\gamma \in [0,1)$, the closed-form expressions for the value functions $v^{a_{1}}_{\gamma},v^{a_{2}}_{\gamma}$ are $v^{a_{1}}_{\gamma}  = 1$ and $v^{a_{2}}_{\gamma}  = \sum_{t=0}^{N-1} r_{t}\gamma^{t}$.

Note that $\gamma \mapsto v^{a_{2}}_{\gamma}$ is a polynomial of degree $N-1$. Using Lagrange interpolation polynomials (section 0.9.11, \cite{horn2012matrix}), we can find coefficients $r_{0},...,r_{N-1}$ such that $\gamma \mapsto v^{a_{1}}_{\gamma}$ is equal to $1$ for all $N-1$ discount factors $\gamma_{1},..., \gamma_{N-1}$ and equal to $0.9$ at $\gamma_{0}=0$.
The value function $v_{\gamma}^{a_{2}}$ resulting from this construction is highlighted in Figure~\ref{fig:value-functions-multiple-intervals} for $N=5$ and $(\gamma_{0},\gamma_{1},\gamma_{2},\gamma_{3},\gamma_{4},\gamma_{5})=(0,0.2,0.4,0.6,0.8,1.0)$.
Let us note $q\colon\gamma \mapsto v^{a_{1}}_{\gamma} - v^{a_{2}}_{\gamma}$. Our choice of the rewards ensures that $q$ is a polynomial of degree $N-1$, with $q(0)>0$, and $q(\gamma)=0$ for $\gamma \in \{\gamma_{1}, ..., \gamma_{N-1}\}$. Because $\gamma \mapsto q(\gamma)-1$ is a polynomial of degree $N-1$ with $N-1$ different real roots, it changes signs at every root.  This shows that  
 $\gamma \mapsto v^{a_{1}}_{\gamma} - v^{a_{2}}_{\gamma}$ is positive on $(\gamma_{0},\gamma_{1})$, negative on $(\gamma_{1},\gamma_{2})$, then positive on $(\gamma_{2},\gamma_{3})$, etc.. Action $a_{1}$ is optimal on $(\gamma_{N-1},\gamma_{N}) = (\gamma_{N-1},1)$ because $N$ is odd. This concludes the proof of Theorem~\ref{th:interval-optimal-policies}.

\begin{figure}[htp]
\begin{center}
  \begin{subfigure}{0.34\textwidth}
    \centering
    \includegraphics[width=1.0\linewidth]{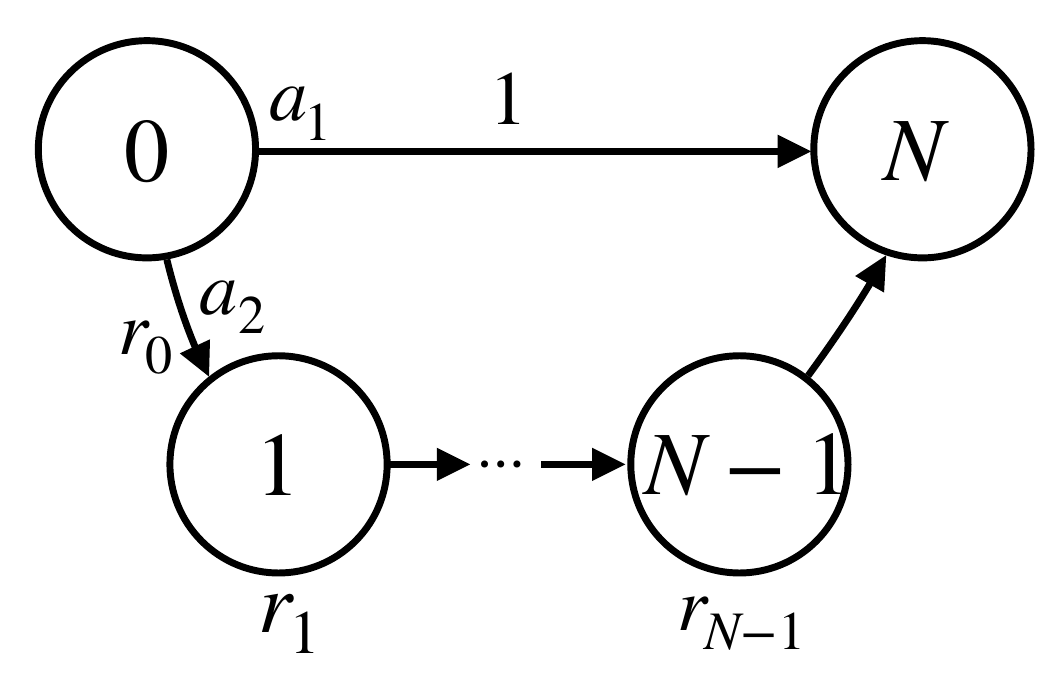}
    \vspace{0.1em}
\caption{}\label{fig:example-multiple-intervals}
\end{subfigure}
\hspace{3ex}
\begin{subfigure}{0.44\textwidth}
  \centering
\includegraphics[width=1.0\linewidth]{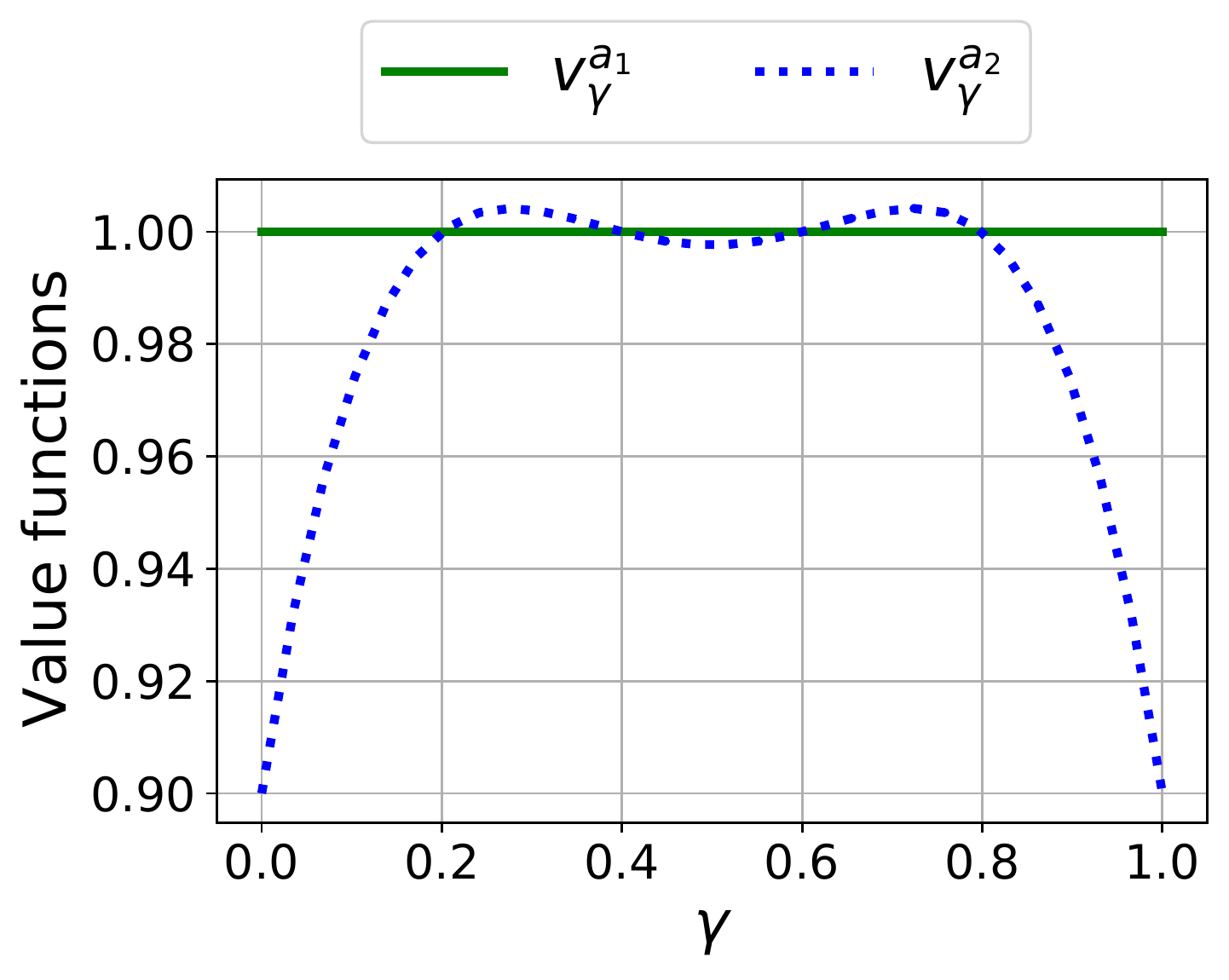}
\caption{}\label{fig:value-functions-multiple-intervals}
\end{subfigure}
\end{center}
\vspace{-1.5em}
\caption{MDP instance for Example~\ref{ex:multiple-intervals} (Figure~\ref{fig:example-multiple-intervals}) and the value functions for $N=5$ (Figure~\ref{fig:value-functions-multiple-intervals}).}
\end{figure}
\end{example} 



\section{Blackwell discount factor}
\label{sec:blackwell-discount-factor}

 In the previous section, we have seen that the classical definition of Blackwell optimality does not lead to simple algorithms to compute a Blackwell-optimal policy. Our main contribution in this section is to introduce the notion of the {\em Blackwell discount factor}, which we use to construct a reduction from Blackwell optimality and average optimality to the discounted optimality criterion. 
This will provide algorithms to compute Blackwell-optimal and average policies that are much simpler than the methods discussed in the previous section.

Intuitively, the Blackwell discount factor $\gammab \in [0,1)$ is a discount factor sufficiently close to $1$ such that any discounted optimal policy is also Blackwell optimal.  
\begin{definition}\label{def:blackwell-discount-factor-2}
    The Blackwell discount factor $\gammab \in [0,1)$ equals to
\begin{equation*}
\gammab = \inf \{ \gamma \in [0,1) \mid   \Pi\opt_{\gamma'}=\Pib, \forall \; \gamma' \in (\gamma,1)\},
\end{equation*}
  where $\Pib$ is the set of Blackwell-optimal policies.
\end{definition}
We first show the existence of the Blackwell discount factor.
\begin{theorem}\label{th:blackwell-discount-factor}
The Blackwell discount factor $\gammab$ exists in any finite MDP.
\end{theorem}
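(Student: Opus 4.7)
The plan is to leverage the quantity $\bar{\gamma}$ already constructed in the proof of Theorem~\ref{th:blackwell-opt-mdp} and show that it witnesses non-emptiness of the set whose infimum defines $\gammab$. Concretely, I will argue that for every $\gamma' \in (\bar{\gamma},1)$ the equality $\Pi^\star_{\gamma'} = \Pib$ holds, which immediately places $\bar{\gamma}$ inside the set
$\{\gamma \in [0,1) \mid \Pi^\star_{\gamma'} = \Pib,\ \forall\, \gamma' \in (\gamma,1)\}$.
Since this set is a non-empty subset of $[0,1)$ bounded below by $0$, its infimum exists and lies in $[0,\bar{\gamma}] \subset [0,1)$, which is exactly what the theorem asserts.

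The forward inclusion $\Pib \subseteq \Pi^\star_{\gamma'}$ for any $\gamma' \in (\bar{\gamma},1)$ follows essentially by definition of Blackwell optimality together with $\bar{\gamma} < 1$: a Blackwell-optimal policy is $\gamma$-optimal for all $\gamma$ sufficiently close to $1$, and by the argument in Step~2 of the proof of Theorem~\ref{th:blackwell-opt-mdp} the set $\Pi^\star_\gamma$ is constant on $(\bar{\gamma},1)$, so "sufficiently close to $1$" can be replaced by "anywhere in $(\bar{\gamma},1)$". For the reverse inclusion, take $\pi \in \Pi^\star_{\gamma'}$ with $\gamma' \in (\bar{\gamma},1)$. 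For every competing policy $\pi'$ and every state $s$, the rational function $\gamma \mapsto v^\pi_{\gamma,s} - v^{\pi'}_{\gamma,s}$ has no zero in $(\bar{\gamma},1)$ by the definition of $\bar{\gamma}$ in equations~\eqref{eq:gamma-pi-pi-prime-s}--\eqref{eq:bar-gamma}, so it cannot change sign on that interval; since it is non-negative at $\gamma'$, it is non-negative on all of $(\bar{\gamma},1)$. Hence $\pi$ is $\gamma''$-discounted optimal for every $\gamma'' \in (\bar{\gamma},1)$, and therefore $\pi \in \Pib$.

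With both inclusions established, $\Pi^\star_{\gamma'} = \Pib$ for all $\gamma' \in (\bar{\gamma},1)$, so $\bar{\gamma}$ belongs to the defining set of $\gammab$; the infimum then exists and satisfies $0 \le \gammab \le \bar{\gamma} < 1$.

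I expect essentially no obstacle here: the entire argument is a repackaging of the proof of Theorem~\ref{th:blackwell-opt-mdp}, and the only subtle point is making sure to prove the \emph{equality} $\Pi^\star_{\gamma'} = \Pib$ (not just the trivial inclusion $\Pib \subseteq \Pi^\star_{\gamma'}$). That subtlety is exactly what the sign-preservation argument on $(\bar{\gamma},1)$ handles. A minor care point is that the definition of $\gammab$ uses an open interval $(\gamma,1)$ on the right, so it is enough to verify the equality on the open interval $(\bar{\gamma},1)$, which is what we do.
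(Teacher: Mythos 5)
Your proposal is correct and follows essentially the same route as the paper: it takes $\bar{\gamma}$ from the proof of Theorem~\ref{th:blackwell-opt-mdp} and uses the sign-preservation argument for $\gamma \mapsto v^{\pi}_{\gamma,s} - v^{\pi'}_{\gamma,s}$ on $(\bar{\gamma},1)$ to show $\Pi\opt_{\gamma'} = \Pib$ there, so that the defining set of $\gammab$ is non-empty and the infimum exists. If anything, you are slightly more explicit than the paper about the inclusion $\Pib \subseteq \Pi\opt_{\gamma'}$ (the paper only spells out the reverse direction), which is a welcome bit of extra care rather than a deviation.
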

\begin{proof}  
To show the existence of the Blackwell discount factor, we show that there exists a discount factor $\gamma \in [0,1)$ such that $\Pi\opt_{\gamma'}=\Pib, \forall \; \gamma' \in (\gamma,1)$. 
Let $\bar{\gamma}$ defined as in Equation~\eqref{eq:bar-gamma}.
We will show that $\bar{\gamma}$ satisfies: $\forall \; \gamma \in [\bar{\gamma},1), \Pi\opt_{\gamma} = \Pib$. Let $\gamma' \in (\bar{\gamma},1)$ and let $\pi$ be a policy that is $\gamma'$-discounted optimal. By definition, we have 
$ v^{\pi}_{\gamma',s} \geq v^{\pi'}_{\gamma',s}, \forall \; \pi' \in \Pi, \forall \; s \in \X.$
Since $\gamma' > \bar{\gamma}$, the map $\gamma \mapsto v^{\pi}_{\gamma,s} - v^{\pi'}_{\gamma,s}$ does not change sign on $[\bar{\gamma},1)$. This shows that $\pi$ is also $\gamma$-discounted optimal for all $\gamma \in (\bar{\gamma},1)$. Therefore, $\pi$ is Blackwell optimal. This shows that any $\gamma$-discounted optimal policy is Blackwell optimal, for any $\gamma \in (\bar{\gamma},1)$.
\end{proof}
\begin{remark}
  Our proof of Theorem \ref{th:blackwell-discount-factor} shows that we always have $\gammab \leq \bar{\gamma}$, with $\bar{\gamma}$ defined as in Equation~\eqref{eq:bar-gamma}. This upper bound is tight, since Example \ref{ex:counter-example-0} shows an MDP instance where we have $\gammab = \bar{\gamma}$.
\end{remark}
\paragraph{Difference from the existing definition.}
It is important to elaborate on the difference between Definition~\ref{def:blackwell-optimality} (classical definition of Blackwell optimality) and Definition~\ref{def:blackwell-discount-factor-2} (Blackwell discount factor).

The distinction between $\gammab$ and $\gamma(\pi)$ has often been overlooked in the literature, where it is common to find statements that suggest that $\gamma > \gamma(\pi)$ implies Blackwell optimality of all discounted optimal policies, e.g. in \cite{dewanto2021examining,wang2023robust}. To the best of our knowledge, we are the first to properly introduce the Blackwell discount factor $\gammab$, to show its sufficiency to compute Blackwell-optimal policies, and to clarify the distinction from the definition relying on $\gamma(\pi)$.

In particular, in Definition~\ref{def:blackwell-optimality}, a Blackwell-optimal policy $\pi$ is optimal for any $\gamma \in [\gamma(\pi),1)$. However, for some $\gamma \in [\gamma(\pi),1)$, there may be other optimal policies that are not Blackwell-optimal, as shown in Proposition~\ref{prop:limit-def}. We show an MDP instance like this in Example~\ref{ex:counter-example-0}, where $\gammab = 3/4$ but where $\gamma(a_{1})=1/2$, and $a_{1}$ is the only Blackwell-optimal policy. This shows that in all generality, we may have $\gamma(\pi) < \gammab$, and $\gamma(\pi) \neq \gammab$. 
  \begin{remark}
The authors in \cite{dewanto2020average,dewanto2021examining} also introduce the notation ``$\gammab$'' but they use it to denote $\gamma(\pi)$.
\end{remark}
\paragraph{Reduction to discounted optimality.}
If $\gammab$ is known for a given MDP instance, it is straightforward to compute a Blackwell-optimal policy: we simply solve a discounted MDP with a discount factor $\gamma > \gammab$. Therefore, the notion of Blackwell discount factor provides a method to reduce the criterion of Blackwell optimality and average optimality to the well-studied criterion of discounted optimality. As we have discussed before, efficient methods for solving discounted MDPs such as value iteration, policy iteration, or linear programming have been extensively studied. These algorithms are much simpler than the two existing algorithms for computing Blackwell-optimal policies.
Note that it is enough to compute an upper bound on $\gammab$. In particular, if we are able to show that $\gammab < \gamma'$ for some $\gamma' \in [0,1)$, then following the definition of $\gammab$, we can compute a Blackwell-optimal policy by solving a discounted MDP with a discount factor $\gamma = \gamma'$.
 Therefore, in the rest of Section~\ref{sec:blackwell-discount-factor}, we focus on obtaining an upper bound on $\gammab$.
\subsection{Upper bound on $\gammab$}\label{sec:bound-blackwell-discount-factor}
We now obtain an instance-dependent upper bound on $\gammab$, i.e., we construct a scalar $\eta(\mcM) \in [0,1)$ for each MDP instance $\mcM = (\X,\A,\bm{r},\bm{P})$, such that $\gammab < 1 - \eta(\mcM)$. 
Our main contribution in this section is Theorem~\ref{th:bound-gammab}, which gives a closed-form expression for $\eta(\mcM)$ as a function of the parameters of the MDP $\mcM$ with rational entries.
\begin{assumption}\label{ass:rational-probabilities} 
There exists an $m \in \N$, such that for any $(s,a,s') \in \X \times \A \times \X$, we have $P_{sas'} = n_{sas'}/m$, for $n_{sas'} \in \N, n_{sas'} \leq m$, and $r_{sa} = q_{sa}/m, |q_{sa}| \leq r_{\infty}$.
\end{assumption}
Our main result in this section is the following theorem.
\begin{theorem}\label{th:bound-gammab}
For any MDP instance $\mcM$ satisfying Assumption~\ref{ass:rational-probabilities}, we have $\gammab < 1-\eta(\mcM)$, with
\begin{align*}
\eta(\mcM) & =\frac{1}{2N^{N/2+2}\left(L+1\right)^{N}},\\
N & = 2 |\X|-1,
L  = 2 \cdot |\X| \cdot r_{\infty} \cdot m^{2|\X|} \cdot 4^{|\X|}.
\end{align*}
\end{theorem}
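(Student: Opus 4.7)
My proof plan is to exhibit, for every ordered triple $(\pi,\pi',s)$ for which the rational function $\gamma \mapsto v_{\gamma,s}^{\pi} - v_{\gamma,s}^{\pi'}$ is not identically zero, an integer polynomial $p_{\pi,\pi',s}(\gamma)$ that has $\gamma=1$ as a root and whose roots in $[0,1)$ include the critical discount factor $\gamma(\pi,\pi',s)$ from Equation~\eqref{eq:gamma-pi-pi-prime-s}. I will then apply a Mignotte-style separation bound for algebraic numbers to force a gap of more than $\eta(\mcM)$ between $1$ and any other root, and combine this with the remark after Theorem~\ref{th:blackwell-discount-factor} that $\gammab \leq \bar{\gamma}$ to conclude the strict inequality $\gammab < 1 - \eta(\mcM)$.

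The polynomial $p_{\pi,\pi',s}$ is obtained directly from the Cramer's-rule representation used to prove Lemma~\ref{lem:value-function-rational}. Writing $\bm{A}_\pi = \bm{I} - \gamma\bm{P}_\pi$ and $\bm{A}_{\pi,s}$ for $\bm{A}_\pi$ with its $s$-th column replaced by $\bm{r}_\pi$, Cramer gives $v_{\gamma,s}^\pi = \det(\bm{A}_{\pi,s})/\det(\bm{A}_\pi)$, so placing $v_{\gamma,s}^\pi - v_{\gamma,s}^{\pi'}$ over the common denominator $\det(\bm{A}_\pi)\det(\bm{A}_{\pi'})$ produces a numerator equal to $\det(\bm{A}_{\pi,s})\det(\bm{A}_{\pi'}) - \det(\bm{A}_{\pi',s})\det(\bm{A}_\pi)$. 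Multiplying by $m^{2|\X|}$ to clear the common denominator $m$ in the entries of $\bm{P}$ and $\bm{r}$ furnished by Assumption~\ref{ass:rational-probabilities} yields $p_{\pi,\pi',s}(\gamma)\in\Z[\gamma]$ of degree at most $(|\X|-1)+|\X| = N$. The crucial observation is that both $\det(\bm{A}_\pi)$ and $\det(\bm{A}_{\pi'})$ vanish at $\gamma = 1$, because the row-stochastic matrices $\bm{P}_\pi$ and $\bm{P}_{\pi'}$ each admit $\mathbf{1}$ as an eigenvector with eigenvalue $1$; hence each of the two terms in the definition of $p_{\pi,\pi',s}$ vanishes at $\gamma=1$ and $p_{\pi,\pi',s}(1)=0$.

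Next I bound the height of $p_{\pi,\pi',s}$ by $L$. Each entry of $m\bm{A}_\pi$ is a polynomial in $\gamma$ of degree at most one with coefficients in $\{-m,\dots,m\}$, while the column-$s$ entries of $m\bm{A}_{\pi,s}$ are the integer rewards $q_{\pi,\cdot}$ of magnitude at most $r_\infty$. Expanding these determinants by Leibniz or Laplace along column $s$, and using the standard fact that the height of a product of polynomials of degrees $d_1,d_2$ is bounded by $(\min(d_1,d_2)+1)$ times the product of heights, I can derive explicit bounds on the heights of $m^{|\X|}\det(\bm{A}_\pi)$ and $m^{|\X|}\det(\bm{A}_{\pi,s})$, and thus on the height of $p_{\pi,\pi',s}$. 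Tracking constants carefully through this computation yields the asserted value $L = 2\,|\X|\,r_\infty\,m^{2|\X|}\,4^{|\X|}$.

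Finally, I will invoke Mignotte's lower bound on the separation of distinct roots of an integer polynomial: any nonzero element of $\Z[\gamma]$ of degree at most $N$ and height at most $L$ has every pair of distinct roots at distance strictly greater than $\eta(\mcM)=1/(2N^{N/2+2}(L+1)^N)$. Applied to $p_{\pi,\pi',s}$, this forces $|1-\gamma(\pi,\pi',s)|>\eta(\mcM)$ whenever the difference of value functions is not identically zero; the remaining triples contribute only $\gamma(\pi,\pi',s)=0$. Maximizing over all triples $(\pi,\pi',s)$ gives $\bar{\gamma}<1-\eta(\mcM)$, and the remark after Theorem~\ref{th:blackwell-discount-factor} then yields $\gammab\leq\bar{\gamma}<1-\eta(\mcM)$. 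The main obstacle is the polynomial-height accounting: the constants appearing in $L$ have to match the precise form of $\eta(\mcM)$, and one must pick the correct normalization of Mignotte's inequality (which comes in several slightly different flavors in the literature) so that $L$ and the separation bound $\eta(\mcM)$ remain compatible.
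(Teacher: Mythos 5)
Your proposal is correct and follows essentially the same route as the paper: the numerator polynomial $p=\det(\bm{A}_{\pi,s})\det(\bm{A}_{\pi'})-\det(\bm{A}_{\pi',s})\det(\bm{A}_{\pi})$ from Cramer's rule, the observation that $p(1)=0$ because $\det(\bm{I}-\bm{P}_{\pi})=0$ for row-stochastic $\bm{P}_{\pi}$, clearing denominators with $m^{2|\X|}$, bounding the coefficients, and invoking a root-separation bound (the paper uses Rump's version, stated in terms of the sum $L$ of absolute values of the coefficients, for which the product of two polynomials satisfies the clean submultiplicative bound $L_{PQ}\leq L_P L_Q$ --- this is exactly what makes the stated value of $L$ come out, whereas your height-based product rule with the extra $(\min(d_1,d_2)+1)$ factor would require converting back to the length and re-checking the constants). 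The only substantive bookkeeping you leave open, matching the coefficient bound to the exact normalization of the separation inequality, is resolved in the paper by bounding the principal minors of $\bm{P}_{\pi}$ and its submatrices via Hadamard's inequality.
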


Our proof uses ideas that are new in the MDP literature. We provide an outline of the proof below and defer the full statement to Appendix~\ref{app:bound-blackwell}.

In the first step, by carefully inspecting the proofs of Theorem~\ref{th:blackwell-opt-mdp} and of Theorem~\ref{th:blackwell-discount-factor}, we note that an upper bound for $\gammab$ is $\bar{\gamma}$, as defined in~\eqref{eq:bar-gamma}: 
$\bar{\gamma} = \max_{\pi,\pi' \in \Pi,s \in \X} \gamma(\pi,\pi',s),$
where for $\pi,\pi' \in \Pi$ and $s \in \X$, $\gamma(\pi,\pi',s)$ is the largest discount factor $\gamma$ in $[0,1)$ for which 
 $v^{\pi}_{\gamma,s} - v^{\pi'}_{\gamma,s}=0$ 
when $\gamma \mapsto v^{\pi}_{\gamma,s} - v^{\pi'}_{\gamma,s}$ is not identically equal to $0$, and $0$ otherwise.
Therefore, we focus on obtaining an upper bound on $\gamma(\pi,\pi',s)$ for any two (stationary, deterministic) policies $\pi,\pi' \in \Pi$ and any state $s \in \X$.

In the second step, following Corollary~\ref{cor:rational-difference-value}, the value functions $\gamma \mapsto v^{\pi}_{s},\gamma \mapsto v^{\pi'}_{s}$ are rational functions, i.e., they are ratios of two polynomials. Therefore, we interpret $v^{\pi}_{\gamma,s} - v^{\pi'}_{\gamma,s}=0$ as a polynomial equation in $\gamma$, i.e., as $p(\gamma)=0$ for a certain polynomial $p$. With this notation, $\gamma(\pi,\pi',s) \in [0,1)$ is a root of $p$. We show that $\gamma=1$ is always a root of $p$, even though value functions are a priori not defined for $\gamma=1$.  We then precisely characterize the degree $N$ and the sum $L$ of the absolute values of the coefficients of the polynomial $p$, depending on the MDP instance $\mcM$. In particular, we prove the following theorem.
\begin{theorem}\label{th:bound-sum-abs-values}
The polynomial $p$ has degree $N=2|\X|-1$. Moreover, $m^{2|\X|} p$ has integral coefficients. The sum of the absolute values of the coefficients of $m^{2|\X|} p$ is bounded by $L=2 \cdot |\X| \cdot r_{\infty} \cdot m^{2|\X|} \cdot 4^{|\X|}.$
\end{theorem}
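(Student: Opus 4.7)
The plan is to define $p$ explicitly via Cramer's rule, and then bound the sum of absolute values of its coefficients by means of the ``permanent inequality'' rather than a crude Leibniz bound. Write $v^{\pi}_{\gamma,s} = \det(\bm{A}_{\pi,s})/\det(\bm{A}_{\pi})$, where $\bm{A}_{\pi} = \bm{I} - \gamma \bm{P}_{\pi}$ and $\bm{A}_{\pi,s}$ is obtained from $\bm{A}_{\pi}$ by replacing its $s$-th column with $\bm{r}_{\pi}$. Setting $f_{\pi} = \det(\bm{A}_{\pi})$ and $g_{\pi} = \det(\bm{A}_{\pi,s})$ and taking $p = g_{\pi} f_{\pi'} - g_{\pi'} f_{\pi}$, one has $v^{\pi}_{\gamma,s} - v^{\pi'}_{\gamma,s} = p(\gamma)/(f_{\pi}(\gamma) f_{\pi'}(\gamma))$. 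Now $f_{\pi}$ has degree at most $|\X|$ in $\gamma$, while $g_{\pi}$ has degree at most $|\X|-1$ because the replaced $s$-th column is $\gamma$-free, so $\deg p \leq 2|\X|-1 = N$. Integrality of $m^{2|\X|} p$ follows from Assumption~\ref{ass:rational-probabilities}: the matrix $m \bm{A}_{\pi}$ has integer entries in $\gamma$ (each $mP_{\pi,ij} = n_{ij} \in \Z$), and $m \bm{A}_{\pi,s}$ likewise (since $m r_{\pi,i} = q_{i\pi(i)} \in \Z$), so $m^{|\X|} f_{\pi}$ and $m^{|\X|} g_{\pi}$ are polynomials in $\gamma$ with integer coefficients.

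For a polynomial $q$, let $\|q\|_{1}$ denote the sum of absolute values of its coefficients; this norm is submultiplicative, and combining Leibniz with the elementary inequality $\mathrm{perm}(X) \leq \prod_i \sum_j X_{ij}$ yields $\|\det(M)\|_{1} \leq \prod_i \sum_j \|M_{ij}\|_{1}$ for any matrix $M$ whose entries are polynomials in $\gamma$. Applied to $m \bm{A}_{\pi}$, the $i$-th row has $\ell_{1}$-sum $(m + n_{ii}) + \sum_{j \neq i} n_{ij} = m + \sum_{j} n_{ij} = 2m$, where the last equality uses the row-stochasticity $\sum_{j} P_{\pi,ij} = 1$. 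This gives $\|m^{|\X|} f_{\pi}\|_{1} \leq (2m)^{|\X|}$. For $g_{\pi}$, I would expand along the $s$-th column to obtain $m^{|\X|} g_{\pi} = \sum_{i} (-1)^{i+s} q_{i\pi(i)} \cdot m^{|\X|-1} M_{is}$, where $M_{is}$ is the $(i,s)$-minor of $\bm{A}_{\pi}$. Each such minor is the determinant of an $(|\X|-1)\times(|\X|-1)$ submatrix of $\bm{A}_{\pi}$ whose scaled row $\ell_{1}$-sums are still at most $2m$ (deleting a column can only shrink them), hence $\|m^{|\X|-1} M_{is}\|_{1} \leq (2m)^{|\X|-1}$ and, summing the $|\X|$ cofactors, $\|m^{|\X|} g_{\pi}\|_{1} \leq |\X| \cdot r_{\infty} \cdot (2m)^{|\X|-1}$.

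Combining the two bounds via submultiplicativity and the triangle inequality applied to $p = g_{\pi} f_{\pi'} - g_{\pi'} f_{\pi}$ yields
\[
\|m^{2|\X|} p\|_{1} \leq 2 \cdot \|m^{|\X|} g_{\pi}\|_{1} \cdot \|m^{|\X|} f_{\pi'}\|_{1} \leq 2 \cdot |\X| \cdot r_{\infty} \cdot (2m)^{2|\X|-1} = |\X| \cdot r_{\infty} \cdot 4^{|\X|} \cdot m^{2|\X|-1},
\]
which is bounded by $L = 2 \cdot |\X| \cdot r_{\infty} \cdot m^{2|\X|} \cdot 4^{|\X|}$ whenever $m \geq 1$. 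The main obstacle, as I see it, is not the degree count or integrality (both are immediate from Cramer's rule) but obtaining a coefficient bound tight enough to avoid the $|\X|!$ factor that a naive Leibniz expansion would introduce. The permanent inequality, combined with the fact that the row-stochasticity of $\bm{P}_{\pi}$ is inherited by every column-deleted submatrix, is exactly what converts that $|\X|!$ into the desired $4^{|\X|}$; this works uniformly even for the non-principal minors $M_{is}$ with $i \neq s$, where the clean eigenvalue-product bound $\prod_{i}(1-\gamma\lambda_i)$ on $\det(\bm{I}-\gamma\bm{P}_{\pi})$ is not directly available.
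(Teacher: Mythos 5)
Your proposal is correct and reaches the stated bound, but the way you control the coefficient sums is genuinely different from the paper's. The setup is identical: both write $v^{\pi}_{\gamma,s}=n(\gamma,s,\pi)/d(\gamma,\pi)$ via Cramer's rule, take $p=n(\cdot,s,\pi)d(\cdot,\pi')-n(\cdot,s,\pi')d(\cdot,\pi)$, get the degree count $|\X|-1$ plus $|\X|$, obtain integrality by clearing denominators with $m^{|\X|}$, and expand the numerator by Laplace cofactors along the reward column. The divergence is in bounding $\|\det(\cdot)\|_{1}$. The paper identifies the coefficient of $\gamma^{k}$ in $\det(\bm{I}-\gamma\bm{P}_{\pi})$ with $(-1)^k$ times the sum of the $C_{|\X|}^{k}$ principal minors of size $k$ of $\bm{P}_{\pi}$, bounds each minor by $1$ via Hadamard's inequality and stochasticity, and then, for the numerator, needs an extra permutation-matrix manipulation to force the non-principal minors $M_{is}$ ($i\neq s$) back into the characteristic-polynomial form $\det(\bm{I}-(\cdot))$, picking up a factor of $2$ from the one perturbed entry. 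You instead apply the Leibniz-plus-permanent estimate $\|\det(\bm{M})\|_{1}\le\prod_{i}\sum_{j}\|M_{ij}\|_{1}$ directly to $m(\bm{I}-\gamma\bm{P}_{\pi})$ and to its column-deleted submatrices, where row-stochasticity gives each row an $\ell_{1}$-mass of exactly $2m$ that can only decrease under column deletion. This handles principal and non-principal minors uniformly and sidesteps the paper's most delicate step; what you give up is the per-degree information (the paper bounds the degree-$k$ coefficient individually by $m^{|\X|}C_{|\X|}^{k}$, which your row-product bound aggregates into a single $(2m)^{|\X|}$). Both routes land at $\|m^{2|\X|}p\|_{1}\le 2\cdot|\X|\cdot r_{\infty}\cdot(2m)^{2|\X|-1}\le L$. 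One cosmetic remark applying equally to both arguments: what is actually established is $\deg p\le 2|\X|-1$, since leading coefficients could in principle cancel; this is all that is needed downstream.
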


In the third step, we lower-bound the distance between any two distinct roots of $p$. To do this, we rely on the following {\em separation bounds of algebraic numbers}.
\begin{theorem}[\cite{rump1979polynomial}]\label{th:separation-algebraic-number}
Let $p$ be a polynomial of degree $N$ with integer coefficients. Let $L$ be the sum of the absolute values of its coefficients. The distance between any two distinct roots of $p$ is strictly larger than $\eta>0$, with
\[
  \eta = \frac{1}{2N^{N/2+2}\left(L+1\right)^{N}}.
\]
\end{theorem}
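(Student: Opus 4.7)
The statement is the classical Mahler--Mignotte--Rump separation bound for integer polynomials. My plan is to reproduce the standard discriminant-based argument, with care to recover Rump's sharp exponents.

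Let $\alpha_1, \ldots, \alpha_N \in \mathbb{C}$ denote the roots of $p$ with multiplicity, and fix any two \emph{distinct} roots, say $\alpha_1 \neq \alpha_2$. First I would reduce to the squarefree case by replacing $p$ with $\tilde p = p/\gcd(p,p')$; Mignotte's factor bound controls the coefficient $\ell_1$-norm of $\tilde p$ in terms of $N$ and $L$, so after the reduction $\tilde p \in \mathbb{Z}[\gamma]$ has distinct roots including $\alpha_1$ and $\alpha_2$, and its coefficient sum is still polynomially bounded in $N$ and $L$.

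The central identity is the product formula for the discriminant,
\[
  |\mathrm{disc}(\tilde p)| \;=\; |a_N|^{2N-2} \prod_{i<j} |\alpha_i - \alpha_j|^2,
\]
where $a_N$ is the leading coefficient of $\tilde p$. Because $\tilde p$ has integer coefficients and distinct roots, $\mathrm{disc}(\tilde p)$ is a nonzero integer, hence $|\mathrm{disc}(\tilde p)| \geq 1$. Isolating the factor $|\alpha_1 - \alpha_2|^2$ gives
\[
  |\alpha_1 - \alpha_2|^2 \;\geq\; \frac{1}{|a_N|^{2N-2} \prod_{\{i,j\} \neq \{1,2\}} |\alpha_i - \alpha_j|^2}.
\]

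The remaining task is to upper bound the right-hand denominator in terms of $N$ and $L$. I would do this in two pieces. First, use the Cauchy-style root bound $|\alpha_i| \leq L+1$ (valid for all roots of an integer polynomial of $\ell_1$-coefficient-norm $L$) together with $|\alpha_i - \alpha_j| \leq |\alpha_i|+|\alpha_j|$ to handle the spurious pair-differences. Second, tighten the discriminant bound by expressing $\mathrm{disc}(\tilde p) = \pm a_N^{-1}\, \mathrm{Res}(\tilde p, \tilde p')$ as a Sylvester determinant of size $(2N-1)\times(2N-1)$ and applying Hadamard's inequality, whose rows have $\ell_2$-norm bounded by the $\ell_1$-norms of the coefficients of $\tilde p$ and $\tilde p'$ (the latter at most $NL$). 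Combining these estimates and rearranging produces the claimed lower bound $|\alpha_1 - \alpha_2| > 1/\bigl(2 N^{N/2+2}(L+1)^N\bigr)$.

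The main obstacle is not the conceptual idea, which is simply the ``a nonzero integer has absolute value at least one'' trick applied to $\mathrm{disc}(\tilde p)$, but rather the bookkeeping required to land exactly on Rump's exponents $N/2+2$ and $N$ instead of a weaker Mignotte-style bound. This requires a sharp Hadamard step and careful tracking of the coefficient growth in the squarefree reduction, which is precisely the optimization carried out in the cited reference; in our application we invoke Rump's statement directly.
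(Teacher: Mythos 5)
The paper does not prove this statement at all: it is imported verbatim from \cite{rump1979polynomial} (and restated in the appendix as Theorem~\ref{th:separation-algebraic-number-app}, where the hypothesis ``possibly with multiple roots'' is made explicit). So your closing move --- ``we invoke Rump's statement directly'' --- is exactly what the paper does, and for the purposes of this paper that is the whole proof. Your preceding sketch of the classical discriminant argument is therefore supplementary, and it contains one genuinely misdirected step. After writing $1 \leq |\mathrm{disc}(\tilde p)| = |a_N|^{2N-2}\prod_{i<j}|\alpha_i-\alpha_j|^2$ and isolating $|\alpha_1-\alpha_2|^2$, what you need is an \emph{upper} bound on $|a_N|^{2N-2}\prod_{\{i,j\}\neq\{1,2\}}|\alpha_i-\alpha_j|^2$. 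Your proposed Hadamard estimate on the Sylvester matrix of $(\tilde p,\tilde p')$ instead produces an upper bound on $|\mathrm{disc}(\tilde p)|$ itself, which points in the wrong direction and cannot be combined with $|\mathrm{disc}(\tilde p)|\geq 1$ to lower-bound a single root gap. The standard (Mahler) fix is to apply Hadamard to the Vandermonde matrix of the roots after a row operation that factors $(\alpha_1-\alpha_2)$ out of the determinant, which yields $\prod_{i<j}|\alpha_i-\alpha_j| \leq |\alpha_1-\alpha_2|\cdot N^{N/2}\,M(\tilde p)^{N-1}/|a_N|^{N-1}$ in terms of the Mahler measure; the crude bound $|\alpha_i-\alpha_j|\leq 2(L+1)$ applied pairwise would cost a factor exponential in $N^2$, not $N$, and cannot reach Rump's exponents.

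Two further cautions if you intend the sketch to stand on its own. First, the squarefree reduction $\tilde p = p/\gcd(p,p')$ does preserve every distinct root of $p$ (so $\alpha_1,\alpha_2$ survive), but the coefficient growth under this reduction, controlled via Mignotte's factor bound, is precisely where the final constants are decided; Rump's contribution is to state a bound for non-squarefree integer polynomials directly in terms of the original $N$ and $L$, which is what the paper needs since its polynomial $p$ in \eqref{eq:definition-P} is not known to be squarefree. Second, note that the quantity $L$ in the theorem is the $\ell_1$-norm of the coefficients of the \emph{original} $p$, so any bound you derive for $\tilde p$ must be translated back; glossing this as ``still polynomially bounded'' is not enough to land on $2N^{N/2+2}(L+1)^N$ exactly. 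None of this affects the paper, which uses the theorem as a black box.
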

Recall that $\gamma(\pi,\pi',s)$ and $1$ are two always roots of $p$, with $\gamma(\pi,\pi',s)<1$. Combining Theorem~\ref{th:bound-sum-abs-values} with Theorem~\ref{th:separation-algebraic-number}, we conclude that
$\gamma(\pi,\pi',s) < 1 - \eta(\mcM)$ for $\eta(\mcM)>0$ defined as in Theorem~\ref{th:bound-gammab}. 
Following the definition of $\bar{\gamma}$, this shows that 
$\bar{\gamma} < 1 - \eta(\mcM),$
and therefore 
$\gammab < 1 - \eta(\mcM),$
which concludes our proof of Theorem \ref{th:bound-gammab}.
\subsection{Discussion}
Using Theorem~\ref{th:bound-gammab}, we obtain the first reduction from Blackwell optimality to discounted optimality: solving a discounted MDP with $\gamma \geq 1 - \eta(\mcM)$ returns a Blackwell-optimal policy. Since Blackwell-optimal policies are also average-optimal, as a consequence of our results we also obtain the first reduction from average  optimality to discounted optimality {\em  without any assumptions on the structure of the underlying Markov chains of the MDP.}

We first discuss the {\bf complexity results} for computing a Blackwell-optimal policy using our reduction.
Policy iteration returns a discounted optimal policy in $O\left(\frac{|\X|^2 |\A|}{1-\gamma}\log\left(\frac{1}{1-\gamma}\right)\right)$ iterations~\citep{scherrer2013improved}, but it may be slow to converge when $\gamma=1-\eta(\mcM)$ as in Theorem~\ref{th:bound-gammab}, since $\eta(\mcM)$ may be close to $0$. Various algorithms exist to obtain convergence faster than $O(1/(1-\gamma))$, such as accelerated value iteration~\citep{Goyal2021} and Anderson acceleration~\citep{zhang2020globally}.

Discounted MDPs can be formulated as linear programs, which can be solved in polynomial-time in the input size of the MDPs, e.g., table 4 in~\cite{ye2011simplex}. Since $\log(\eta(\mcM)) = O\left(|\X| \log(r_{\infty}) + |\X|^{2}\log(m)\right)$, interior point-methods solve a discounted MDP with $\gamma = 1-\eta(\mcM)$ in polynomial-time~\citep{ye2005new}. Therefore, we provide a polynomial-time algorithm for computing Blackwell- and  average-optimal policies for any MDP instance.

{\bf Potential improvements} for the upper bound on $\gammab$ obtained in Theorem~\ref{th:bound-gammab} are an important future direction. For instance, the separation bound from Theorem~\ref{th:separation-algebraic-number} holds for any polynomials, and more precise lower bounds could be obtained for the specific polynomial $p$ appearing in the proof of Theorem~\ref{th:bound-gammab}. Additionally, tighter upper bounds could be obtained for specific MDP instances.
\section{The case of robust MDPs}\label{sec:robust-mdps}
In practice, the value function $\bm{v}^{\pi}_{\gamma}$ may be very sensitive to the values of the transition probabilities $\bm{P}$. To emphasize this dependence, in this section we note $\bm{v}^{\pi,\bm{P}}_{\gamma}$ for the value function associated with a policy $\pi$ and a transition probability $\bm{P}$, defined similarly as in~\eqref{eq:discounted-value-function}. Robust MDPs (RMDPs) ameliorate this issue by considering an {\em uncertainty set} $\U$, which can be seen as a plausible region for the transition probabilities $\bm{P} \in \U$. We focus on the case of sa-rectangular MDPs~\citep{iyengar2005robust}, where $\U = \times_{(s,a) \in \X \times \A} \U_{sa}$ for $\U_{sa} \subset \Delta(\X)$.  The worst-case value function $\bm{v}^{\pi,\U}_{\gamma} \in \R^{\X}$ of a policy $\pi$ is defined as $v_{\gamma,s}^{\pi,\U} = \min_{\bm{P} \in \U} v_{\gamma,s}^{\pi,\bm{P}}, \forall \; s \in \X.$
In discounted RMDPs, the goal is to compute a {\em robust discounted  optimal} policy, defined as follows.
\begin{definition}\label{def:discounted-optimality-rmdps}
Given $\gamma \in [0,1)$, a policy $\pi \in \Pi$ is robust $\gamma$-discounted optimal if $v_{\gamma,s}^{\pi,\U} \geq v_{\gamma,s}^{\pi',\U}, \forall \; \pi' \in \Pi, \forall \; s \in \X.$ 
We write $\Pi\opt_{\gamma,{\sf rob}}$ the set of robust $\gamma$-discounted optimal policies.
\end{definition}
Blackwell optimality for RMDPs is studied in \cite{tewari2007bounded,goyal2022robust}, to address the sensitivity of the robust value functions as regards the choice of discount factors. Its connection to average reward RMDPs is discussed in \cite{wang2023robust}.
\begin{definition}\label{def:blackwell-optimality-robust}
    A policy $\pi \in \Pi$ is {\em robust Blackwell-optimal} if there exists $\gamma \in [0,1)$, such that $\pi \in \Pi\opt_{\gamma',{\sf r}}, \forall \; \gamma' \in [\gamma,1)$.
We call $\Pibr$ the set of robust Blackwell-optimal policies.
\end{definition}
\cite{goyal2022robust} shows the existence of a Blackwell-optimal policy for RMDPs, under the condition that $\U$ is sa-rectangular and has finitely many extreme points. This is the case for popular polyhedral uncertainty sets, e.g., when $\U_{sa}$ is based on the $\ell_{p}$ distance, for $p \in \{1,\infty\}$~\citep{iyengar2005robust,ho2018fast,givan1997bounded}:
\begin{equation}\label{eq:sa-rec-uncertainty-set-l-p}
    \U_{sa} = \{ \bm{p} \in \Delta(\X) \; | \; \| \bm{p}-\bm{P}^{0}_{sa} \|_{p} \leq \alpha_{sa} \},
\end{equation}
for some estimated kernel $\bm{P}^{0}$ and some radius $\alpha_{sa}>0.$
\paragraph{Robust Blackwell discount factor.} For RMDPs, we define the robust Blackwell discount factor $\gammabr$ as follows.
\begin{definition}\label{def:blackwell-discount-factor-rob}
    We define the robust Blackwell discount factor $\gammabr \in [0,1)$ as
    \begin{equation*}
    \gammabr = \inf \{ \gamma \in [0,1) \mid   \Pi\opt_{\gamma',{\sf r}}=\Pibr, \forall\gamma' \in (\gamma,1)\}.
  \end{equation*}
\end{definition}
We provide detailed  proof of the existence of the robust Blackwell discount factor in Appendix~\ref{app:proof-rmdps}.
The proof strategy is the same as for the existence of the Blackwell discount factor for MDPs. In particular, we can obtain the same upper bound on $\gammabr$, by studying the values of $\gamma$ for which $\gamma \mapsto v^{\pi,\bm{P}}_{\gamma,s} - v^{\pi',\bm{P}'}_{\gamma,s}$ cancels, for any two policies $\pi,\pi' \in \Pi$ and any two extreme points $\bm{P},\bm{P}'$ of $\U$. Writing $\gamma(\pi,\pi',s,\bm{P},\bm{P}')$ for the largest zero in $[0,1)$ of the function
$\gamma \mapsto v^{\pi,\bm{P}}_{\gamma,s} - v^{\pi',\bm{P}'}_{\gamma,s}$ if it is not identically equal to zero, or $\gamma(\pi,\pi',s,\bm{P},\bm{P}')=0$ otherwise, an upper bound on $\gammabr$ for RMDPs can be computed as $\bar{\gamma}_{{\sf r}}$, defined as
\[\bar{\gamma}_{{\sf r}}= \max_{\pi,\pi' \in \Pi,s \in \X} \max_{ \bm{P},\bm{P}' \in \U_{\sf ext}}  \gamma(\pi,\pi',s,\bm{P},\bm{P}')
 \]
with $\U_{\sf ext}$ the set of extreme points of $\U$. This directly leads to the following theorem.
 \begin{theorem}\label{th:bound-gammab-sa}
     Assume that $\U$ is sa-rectangular with finitely many extreme points, and suppose that Assumption~\ref{ass:rational-probabilities} holds when $\bm{P}$ is replaced by any extreme points of $\U.$ Then $\gammabr \leq 1 -\eta(\mcM)$, with $\eta(\mcM)$ defined as in Theorem~\ref{th:bound-gammab}.
 \end{theorem}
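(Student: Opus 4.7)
The plan is to follow the template of Theorem~\ref{th:bound-gammab}, carrying the additional dependence on pairs of extreme points of $\U$ through each step. Concretely, I would (i) show that $\gammabr \leq \bar{\gamma}_{\sf r}$ by adapting the existence proof of Theorem~\ref{th:blackwell-discount-factor} to robust value functions, and (ii) bound each quantity $\gamma(\pi,\pi',s,\bm{P},\bm{P}')$ by $1-\eta(\mcM)$ via Theorem~\ref{th:bound-sum-abs-values} combined with the separation bound of Theorem~\ref{th:separation-algebraic-number}. Since $\U_{\sf ext}$ is finite, the maximum in the definition of $\bar{\gamma}_{\sf r}$ is taken over a finite set of tuples $(\pi,\pi',s,\bm{P},\bm{P}')$, so combining (i) and (ii) yields the claim.

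For step (i), I would invoke the standard fact that under sa-rectangularity with finitely many extreme points, for every policy $\pi$ and every $\gamma \in [0,1)$ the worst-case kernel $\arg\min_{\bm{P} \in \U} v^{\pi,\bm{P}}_{\gamma,s}$ is attained at some extreme point of $\U$. Thus $\gamma \mapsto v^{\pi,\U}_{\gamma,s}$ is the pointwise minimum over the finite collection $\{\gamma \mapsto v^{\pi,\bm{P}}_{\gamma,s} : \bm{P} \in \U_{\sf ext}\}$ of rational functions, and hence piecewise rational on $[0,1)$. Any sign change of $\gamma \mapsto v^{\pi,\U}_{\gamma,s} - v^{\pi',\U}_{\gamma,s}$ must occur either at a zero of some $v^{\pi,\bm{P}}_{\gamma,s} - v^{\pi',\bm{P}'}_{\gamma,s}$ for a fixed pair $(\bm{P},\bm{P}') \in \U_{\sf ext}^{2}$, or at a kernel-switching point for $\pi$ or $\pi'$ — which is itself a zero of $v^{\pi,\bm{P}}_{\gamma,s} - v^{\pi,\bm{P}'}_{\gamma,s}$ (or the analogous expression for $\pi'$), and hence already contained in the candidate set defining $\bar{\gamma}_{\sf r}$ (using the case $\pi = \pi'$). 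Mirroring the argument of Theorem~\ref{th:blackwell-discount-factor}, any policy that is robust $\gamma'$-discounted optimal for some $\gamma' > \bar{\gamma}_{\sf r}$ remains robust discounted optimal on the entire interval $(\bar{\gamma}_{\sf r},1)$, yielding $\gammabr \leq \bar{\gamma}_{\sf r}$.

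For step (ii), each pair $(\bm{P},\bm{P}') \in \U_{\sf ext}^{2}$ satisfies Assumption~\ref{ass:rational-probabilities} with common denominator $m$ by hypothesis, so the analysis of Theorem~\ref{th:bound-sum-abs-values} applies verbatim to the equation $v^{\pi,\bm{P}}_{\gamma,s} - v^{\pi',\bm{P}'}_{\gamma,s} = 0$, producing a polynomial $p_{\bm{P},\bm{P}'}$ of degree $N = 2|\X|-1$ with $\gamma=1$ as a root and with integer-rescaled coefficient-sum bounded by $L$. Applying Theorem~\ref{th:separation-algebraic-number} gives $\gamma(\pi,\pi',s,\bm{P},\bm{P}') < 1 - \eta(\mcM)$, and maximizing over the finite set of policies, states, and extreme-point pairs yields $\bar{\gamma}_{\sf r} \leq 1 - \eta(\mcM)$, hence $\gammabr \leq 1 - \eta(\mcM)$. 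The main technical obstacle I anticipate lies in step (i): rigorously justifying that the sign changes of the piecewise-rational difference $v^{\pi,\U}_{\gamma,s} - v^{\pi',\U}_{\gamma,s}$ are captured by the zero sets entering the definition of $\bar{\gamma}_{\sf r}$, despite the active worst-case kernel varying with $\gamma$. A clean way to handle this is to partition $[\bar{\gamma}_{\sf r},1)$ into finitely many subintervals on which the worst-case kernels of both $\pi$ and $\pi'$ are constant, apply the classical no-sign-change argument on each subinterval, and use continuity of the robust value function at the partition breakpoints to glue the conclusion together.
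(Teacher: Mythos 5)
Your proposal is correct and follows essentially the same route as the paper: reduce $\gammabr$ to $\bar{\gamma}_{\sf r}$ via the extreme-point characterization of the worst-case kernel, then apply the polynomial construction of Theorem~\ref{th:bound-sum-abs-values} and the separation bound of Theorem~\ref{th:separation-algebraic-number} to each fixed pair $(\bm{P},\bm{P}')\in\U_{\sf ext}^2$, exactly as the paper does in Appendix~\ref{app:proof-rmdps}. Your step (i) is in fact slightly more careful than the paper's own argument, which asserts directly that the robust difference cannot vanish on $(\bar{\gamma}_{\sf r},1)$; your explicit handling of the kernel-switching points (noting they are zeros of $v^{\pi,\bm{P}}_{\gamma,s}-v^{\pi,\bm{P}'}_{\gamma,s}$ and hence already captured by the $\pi=\pi'$ case in the definition of $\bar{\gamma}_{\sf r}$) closes a subtlety the paper leaves implicit.
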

The following proposition provides sufficient conditions for Assumption~\ref{ass:rational-probabilities} to hold for any extreme points of $\U$.
\begin{proposition}\label{prop:rational-sa-ell-1-ell-infty}
    Assume that for each $(s,a) \in \X \times \A$, $\U_{sa}$ is constructed as in~\eqref{eq:sa-rec-uncertainty-set-l-p}, with $\bm{P}^{0}$ satisfying Assumption \ref{ass:rational-probabilities} and $\alpha_{sa} = \beta_{sa}/m$ for some $\beta_{sa} \in \N$. Then for $p=\infty$, Assumption~\ref{ass:rational-probabilities} holds for any extreme points of $\U$, and for $p=1$, Assumption~\ref{ass:rational-probabilities} holds for any extreme points of $\U$ by replacing $m$ with $m'=2m.$
\end{proposition}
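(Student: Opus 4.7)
The strategy is to reduce the claim to an analysis of the extreme points of each factor $\U_{sa}$ separately. Because $\U = \times_{(s,a) \in \X \times \A} \U_{sa}$ is sa-rectangular, every extreme point of $\U$ is a tuple of extreme points of the individual $\U_{sa}$. It therefore suffices to show that any extreme point of $\U_{sa}$ has entries of the form $n/m$ with $n \in \N$, $n \leq m$ (for $p = \infty$) or of the form $n/(2m)$ with $n \in \N$, $n \leq 2m$ (for $p = 1$).

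\textbf{Case $p = \infty$.} The set $\U_{sa}$ is carved out of the simplex by the box constraints $P^{0}_{sa,s'} - \alpha_{sa} \leq p_{s'} \leq P^{0}_{sa,s'} + \alpha_{sa}$ for each $s' \in \X$, together with $\sum_{s'} p_{s'} = 1$ and $p_{s'} \geq 0$. Since $\U_{sa}$ has dimension $|\X| - 1$, at an extreme point $\bm{p}^\star$ there are $|\X| - 1$ linearly independent tight inequality constraints, each of which pins a single coordinate $p_{s'}^\star$ to a value in $\{0,\, P^{0}_{sa,s'} - \alpha_{sa},\, P^{0}_{sa,s'} + \alpha_{sa}\}$. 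All three candidates are rationals with denominator $m$ under Assumption~\ref{ass:rational-probabilities}. The one remaining ``free'' coordinate $p^\star_{s^*}$ is determined by $\sum_{s'} p_{s'}^\star = 1$, so it equals one minus a sum of such rationals, and hence still has denominator $m$. This gives the $p = \infty$ part.

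\textbf{Case $p = 1$.} Here I would linearize $\|\bm{p} - \bm{P}^{0}_{sa}\|_1 \leq \alpha_{sa}$ via auxiliary variables $u_{s'}$ satisfying $u_{s'} \geq p_{s'} - P^{0}_{sa,s'}$, $u_{s'} \geq P^{0}_{sa,s'} - p_{s'}$, and $\sum_{s'} u_{s'} \leq \alpha_{sa}$, yielding a $(2|\X|-1)$-dimensional polytope whose extreme points require $2|\X|-1$ tight inequalities. For each $s'$, at least one of the two $u_{s'}$ constraints must be tight (otherwise $u_{s'}$ could be decreased), so I would classify each $s'$ into: type $A$, both $u_{s'}$ constraints tight (forcing $p_{s'} = P^{0}_{sa,s'}$); type $D$, $p_{s'} = 0$ together with the matching $u_{s'}$ constraint tight; and types $B$ or $C$, where a single $u_{s'}$ constraint is tight with $p_{s'} > 0$. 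Counting the tight constraints then shows that either (i) $\sum u_{s'} = \alpha_{sa}$ is slack and a single coordinate is free, pinned by $\sum p = 1$ to a rational with denominator $m$; or (ii) $\sum u_{s'} = \alpha_{sa}$ is tight and exactly two coordinates $p_i, p_j$ are free, with opposite-sign deviations from $\bm{P}^{0}_{sa}$, satisfying $p_i + p_j = \gamma_1$ and $p_i - p_j = \gamma_2$ for $\gamma_1, \gamma_2$ rational with denominator $m$. Inverting this $2\times 2$ system (determinant $\pm 2$) yields $p_i, p_j$ with denominator $2m$, which gives the $p = 1$ part with $m' = 2m$.

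\textbf{Main obstacle.} I expect the bookkeeping in case (ii) above to be the trickiest step: one must verify that the two free coordinates necessarily have opposite-sign deviations from $\bm{P}^{0}_{sa}$ (otherwise the $2 \times 2$ system degenerates and the configuration is not an extreme point), and handle boundary subcases (e.g.\ entries of $\bm{P}^{0}_{sa}$ that vanish, or overlapping tight constraints) without inflating the denominator past $2m$. Once this classification is in place, the remainder of the argument is routine, and the product structure of $\U$ lifts the per-factor bound to all extreme points of $\U$.
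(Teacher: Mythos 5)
Your proposal is correct in outline but takes a genuinely different route from the paper. You argue directly about the facial structure of the polytopes: for $p=\infty$ you count tight single-coordinate constraints at a vertex of the box-constrained simplex, and for $p=1$ you lift to the standard LP reformulation with auxiliary variables $u_{s'}$ and classify the tight constraints at a vertex of the lifted polytope. The paper instead characterizes optimal solutions of the inner problem $\min_{\bm{p}\in\U_{sa}}\bm{p}\tr\bm{v}$: for $p=\infty$ it invokes the known closed-form ``sorting'' solution (add $\alpha_{sa}$ to the coordinates with smallest $v$, subtract it from the rest, fix one pivot coordinate via the sum-to-one constraint), and for $p=1$ it cites a structural lemma of Ho et al.\ asserting that an optimal basic feasible solution has all but two coordinates equal to $0$ or $P^0_{sai}$, with the remaining two solved from a $2\times 2$ system whose determinant contributes the factor $2$. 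Your approach buys self-containedness and, arguably, proves the literal statement more faithfully (you bound \emph{all} extreme points, whereas the paper's argument exhibits rational \emph{minimizers}, which is what is actually used downstream but is formally weaker than characterizing every extreme point). The paper's approach buys brevity: the case analysis you flag as your ``main obstacle'' for $p=1$ --- showing that at most two coordinates deviate freely from $\{0,P^0_{sai}\}$ and that they are coupled through $\sum_{s'}p_{s'}=1$ and $\sum_{s'}|p_{s'}-P^0_{sas'}|=\alpha_{sa}$ with a determinant of $\pm 2$ --- is exactly the content of the imported lemma. Your final $2\times 2$ system $p_i+p_j=\gamma_1$, $p_i-p_j=\gamma_2$ coincides with the paper's computation $2p_{j_1}=\alpha_{sa}-\sum_i|p_i-P^0_{sai}|+P^0_{saj_1}-P^0_{saj_2}+1-\sum_i p_i$, so the denominator $2m$ comes out the same way. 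To make your version complete you must either finish that vertex classification (including the degenerate subcases you list) or cite the same basic-feasible-solution lemma; as written, the $p=1$ half is a sound plan rather than a proof.
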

Based on Theorem~\ref{th:bound-gammab-sa}, we obtain the first reduction from robust Blackwell optimality to robust discounted optimality. Since discounted RMDPs can be solved with value iteration or policy iteration, 
 we provide the first algorithms to compute a robust Blackwell-optimal policy for RMDPs with sa-rectangular uncertainty, when the uncertainty set is based on the $\ell_{1}$ or the $\ell_{\infty}$ distance. Note that the classical algorithms for computing Blackwell-optimal policies in MDPs do not extend to RMDPs: they are based on the LP formulation of MDPs, and such a formulation is not known for RMDPs~\citep{grand2022convex}.
\section{Conclusion} We introduce the notion of the Blackwell discount factor for MDPs and robust MDPs and we provide an upper bound in all generality. Based on this upper bound, any progress in solving discounted MDPs, one of the most active research directions in RL, can be combined with our results to obtain new algorithms for computing average and Blackwell-optimal policies.
Our work also opens new research avenues for MDPs and RMDPs. In particular, the proof techniques for our bound on $\gammab$ and $\gammabr$, based on the separation of algebraic numbers, are novel and they could be tightened for specific instances or different optimality criteria, such as bias optimality or $n$-discount optimality. The notion of {\em approximate} Blackwell optimality as well as the existence of the robust Blackwell discount factor for other uncertainty sets, e.g., s-rectangular or non-polyhedral sa-rectangular uncertainty sets,  are also interesting directions of research.


\bibliographystyle{icml2023} 
\bibliography{ref}
\appendix
\section{Proof of Section~\ref{sec:bound-blackwell-discount-factor}}\label{app:bound-blackwell}
In this appendix, we provide the proof for Theorem~\ref{th:bound-gammab}.
As noted in Section~\ref{sec:bound-blackwell-discount-factor}, to bound $\gammab$, it is enough to obtain an upper bound on $\gamma(\pi,\pi',s)$ for any $\pi,\pi' \in \Pi$ and $s \in \X$ such that $\gamma \mapsto v_{\gamma,s}^{\pi} - v_{\gamma,s}^{\pi'}$ is not identically equal to $0$, since $\gammab \leq \max_{\pi,\pi' \in \Pi,s \in \X} \gamma(\pi,\pi',s)$.
\paragraph{Step 1.} We start by studying in more detail the properties of the value functions. The following lemma follows directly from Cramer's rule, as explained in Section~\ref{sec:blackwell-optimality}.
\begin{lemma}\label{lem:value-function}
We have
\begin{equation}\label{eq:cramer-rule-value-function}
    v_{\gamma,s}^{\pi} = \frac{\det\left(\bm{M}(\gamma,s,\pi)\right)}{\det\left(\bm{I} - \gamma \bm{P}_{\pi}\right)},
\end{equation}
with
$\bm{M}(\gamma,s,\pi)$ the matrix formed by replacing the $s$-th column of $\bm{I} - \gamma \bm{P}_{\pi}$ by the vector $\bm{r}_{\pi}$.
\end{lemma}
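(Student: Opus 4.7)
The plan is to derive the identity directly from the Bellman equation for the discounted value function of a fixed policy, combined with Cramer's rule. There is essentially no subtlety beyond verifying that the linear system is nonsingular on $[0,1)$, so the argument is short.

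First, I would recall (as noted in Section~\ref{sec:blk-definition}) that $\bm{v}^{\pi}_{\gamma}$ satisfies the Bellman equation
\begin{equation*}
\bm{v}^{\pi}_{\gamma} \;=\; \bm{r}_{\pi} + \gamma\,\bm{P}_{\pi}\,\bm{v}^{\pi}_{\gamma},
\end{equation*}
which can be rewritten as the linear system $\bm{A}(\gamma)\,\bm{v}^{\pi}_{\gamma} = \bm{r}_{\pi}$, where $\bm{A}(\gamma) := \bm{I} - \gamma\bm{P}_{\pi}$. This follows from unrolling the definition~\eqref{eq:discounted-value-function} of $v^{\pi}_{\gamma,s}$ and using the Markov property (or, alternatively, from the fact that the Bellman operator $\bm{T}^{\pi}\bm{v} = \bm{r}_{\pi} + \gamma \bm{P}_{\pi}\bm{v}$ is a $\gamma$-contraction on $\R^{\X}$ whose unique fixed point is $\bm{v}^{\pi}_{\gamma}$).

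Next, I would show that $\bm{A}(\gamma)$ is invertible for every $\gamma \in [0,1)$. Since $\bm{P}_{\pi}$ is row-stochastic, its spectral radius is at most $1$, so every eigenvalue $\lambda$ of $\gamma \bm{P}_{\pi}$ satisfies $|\lambda| \le \gamma < 1$. Hence $1$ is not an eigenvalue of $\gamma \bm{P}_{\pi}$, so $\det(\bm{A}(\gamma)) \neq 0$ and the linear system has a unique solution.

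Finally, Cramer's rule (see e.g.\ section~0.8.3 of~\cite{horn2012matrix}) yields, for every $s \in \X$,
\begin{equation*}
v_{\gamma,s}^{\pi} \;=\; \frac{\det\bigl(\bm{M}(\gamma,s,\pi)\bigr)}{\det\bigl(\bm{I} - \gamma\,\bm{P}_{\pi}\bigr)},
\end{equation*}
where $\bm{M}(\gamma,s,\pi)$ is obtained from $\bm{A}(\gamma) = \bm{I} - \gamma\bm{P}_{\pi}$ by replacing its $s$-th column with $\bm{r}_{\pi}$, which is exactly~\eqref{eq:cramer-rule-value-function}. There is no real obstacle here: the only thing one must be careful about is the nonsingularity of $\bm{A}(\gamma)$ on the whole interval $[0,1)$, which is what legitimizes both the existence of a unique value function and the applicability of Cramer's rule.
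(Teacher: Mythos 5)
Your proof is correct and follows essentially the same route as the paper: the Bellman equation $\bm{v}^{\pi}_{\gamma} = \bm{r}_{\pi} + \gamma\bm{P}_{\pi}\bm{v}^{\pi}_{\gamma}$ viewed as a linear system, invertibility of $\bm{I}-\gamma\bm{P}_{\pi}$ on $[0,1)$, and Cramer's rule. Your explicit spectral-radius justification of nonsingularity is a welcome detail that the paper only makes explicit later (in Lemma~\ref{lem:sign-denominator}).
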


From Lemma~\ref{lem:value-function}, we have
\[v_{\gamma,s}^{\pi} = \frac{n(\gamma,s,\pi)}{d(\gamma,\pi) }\]
for $n(\gamma,s,\pi) = \det\left(\bm{M}(\gamma,s,\pi)\right)$ and $d(\gamma,\pi) = \det\left(\bm{I} - \gamma \bm{P}_{\pi}\right)$. We choose the letter $n$ for {\em nominator} and the letter $d$ for {\em denominator}.

Note that $\gamma \mapsto n(\gamma,s,\pi)$ is a polynomial of degree at most $|\X|-1$, while $\gamma \mapsto d(\gamma,\pi)$ is a polynomial of degree at most $|\X|$. 

We have, by definition, 
\begin{align*}
    v_{\gamma,s}^{\pi} - v_{\gamma,s}^{\pi'}
    & = \frac{n(\gamma,s,\pi)}{d(\gamma,\pi)} - \frac{n(\gamma,s,\pi')}{d(\gamma,\pi')}\\\
    & = \frac{n(\gamma,s,\pi)d(\gamma,\pi') - n(\gamma,s,\pi)d(\gamma,\pi) }{d(\gamma,\pi)d(\gamma,\pi')}
\end{align*}
Therefore, $v_{\gamma,s}^{\pi} - v_{\gamma,s}^{\pi'} = 0$ for $\gamma \in [0,1)$ implies that $\gamma$ is a root of the following polynomial equation in $\gamma$:
\begin{equation}\label{eq:poly-eq-gamma}
p(\gamma)=0,
\end{equation}
for $p$ the polynomial defined as
\begin{equation}\label{eq:definition-P}
p(\gamma)=n(\gamma,s,\pi)d(\gamma,\pi') - n(\gamma,s,\pi')d(\gamma,\pi).
\end{equation} 
\paragraph{Step 2.} We now study the properties of the polynomial $p$. Note that it is straightforward that $p$ is a polynomial of degree $N=2|\X|-1$.
We first study the properties of the polynomial $\gamma \mapsto
 d(\pi,\gamma)$.
We have the following lemma.
\begin{lemma}\label{lem:sign-denominator}
We have
\[d(\gamma,\pi) > 0, \forall \gamma \in [0,1), \forall \; \pi \in \Pi,\]
and $d(1,\pi)=0, \forall \; \pi \in \Pi.$
\end{lemma}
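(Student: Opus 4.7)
My plan is to treat the two claims separately and rely on standard facts about row-stochastic matrices, since $\bm{P}_{\pi}$ is a row-stochastic matrix whenever $\pi$ is a deterministic policy.

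For the equality $d(1,\pi)=0$, the key observation is that $\bm{P}_{\pi}\bm{1}=\bm{1}$, where $\bm{1}\in\R^{\X}$ is the all-ones vector. Hence $(\bm{I}-\bm{P}_{\pi})\bm{1}=\bm{0}$, which means $\bm{I}-\bm{P}_{\pi}$ is singular, so $d(1,\pi)=\det(\bm{I}-\bm{P}_{\pi})=0$.

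For the strict inequality $d(\gamma,\pi)>0$ on $[0,1)$, I would first argue that $d(\gamma,\pi)\neq 0$ on this interval, and then upgrade the conclusion to positivity by a continuity argument. Non-vanishing follows from spectral considerations: every eigenvalue $\lambda$ of a row-stochastic matrix satisfies $|\lambda|\leq 1$ (by the Gershgorin circle theorem, or equivalently by the Perron-Frobenius theorem applied to $\bm{P}_{\pi}$). Consequently, the eigenvalues of $\gamma\bm{P}_{\pi}$ lie in the disc of radius $\gamma<1$, so $1$ is not an eigenvalue of $\gamma\bm{P}_{\pi}$ and $\bm{I}-\gamma\bm{P}_{\pi}$ is invertible, which gives $d(\gamma,\pi)\neq 0$.

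To get strict positivity, note that $\gamma \mapsto d(\gamma,\pi)$ is a polynomial in $\gamma$, hence continuous on $[0,1)$, and that $d(0,\pi)=\det(\bm{I})=1>0$. Since a continuous function on the connected interval $[0,1)$ that is nowhere zero must maintain its sign, we conclude $d(\gamma,\pi)>0$ for all $\gamma\in[0,1)$. There is no real obstacle here; the whole argument is essentially an application of Perron-Frobenius together with continuity, and both halves of the lemma follow from the single structural fact that $\bm{P}_{\pi}$ is row-stochastic.
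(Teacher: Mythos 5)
Your proof is correct and rests on the same structural fact as the paper's: the spectral radius of the row-stochastic matrix $\bm{P}_{\pi}$ is at most $1$, with $1$ itself an eigenvalue (eigenvector $\bm{1}$), which gives both the singularity of $\bm{I}-\bm{P}_{\pi}$ at $\gamma=1$ and the invertibility of $\bm{I}-\gamma\bm{P}_{\pi}$ for $\gamma\in[0,1)$. The only difference is the final step for strict positivity: the paper factors $d(\gamma,\pi)=\prod_{\lambda}(1-\gamma\lambda)^{\alpha_{\lambda}}$ over the (possibly complex) spectrum and reads off the sign, whereas you deduce non-vanishing and then propagate the sign of $d(0,\pi)=1$ by continuity --- a minor variation that, if anything, is slightly cleaner since it sidesteps having to pair the complex eigenvalues into conjugate factors.
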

\begin{proof}[Proof of Lemma~\ref{lem:sign-denominator}]
This lemma follows from the relation between the determinant of a matrix and its eigenvalues, through the characteristic polynomial:
\[d(\gamma,\pi) = \det\left(\bm{I} - \gamma \bm{P}_{\pi}\right) =  \prod_{\lambda \in Sp(\bm{P}_{\pi})} \left(1 - \gamma \lambda \right)^{\alpha_{\lambda}},\]
with $\alpha_{\lambda}$ the algebraic multiplicity of the (potentially complex) eigenvalue $\lambda$ in the spectrum $Sp(\bm{P}_{\pi})$ of $\bm{P}_{\pi}$. Since $\bm{P}_{\pi}$ is the transition matrix of a Markov chain, we know that the modulus of any eigenvalue $\lambda$ of $\bm{P}_{\pi}$ is smaller or equal to $1$. This shows that $d(\gamma,\pi) >0, \forall \; \gamma \in [0,1), \forall \; \pi \in \Pi$. To show $d(1,\pi)=0$, we simply note that $1 \in Sp(\bm{P}_{\pi})$ since $\bm{P}_{\pi}$ is the transition matrix of a Markov chain.
\end{proof}
From Lemma~\ref{lem:sign-denominator} and the definition of $p$ as in ~\eqref{eq:definition-P}, it is straightforward that $p(1)=0$.
\begin{lemma}\label{lem:one-root-nominator}
$\gamma=1$ is a root of $p$.
\end{lemma}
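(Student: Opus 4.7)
The plan is to directly substitute $\gamma = 1$ into the explicit expression for $p$ given in~\eqref{eq:definition-P} and use Lemma~\ref{lem:sign-denominator} to kill both terms simultaneously. Since
\[
p(\gamma) = n(\gamma,s,\pi)\, d(\gamma,\pi') - n(\gamma,s,\pi')\, d(\gamma,\pi),
\]
and Lemma~\ref{lem:sign-denominator} asserts $d(1,\pi) = 0$ for every policy $\pi \in \Pi$ (because $1$ lies in the spectrum of the stochastic matrix $\bm{P}_\pi$), both the terms $d(1,\pi')$ and $d(1,\pi)$ appearing as factors in~\eqref{eq:definition-P} vanish at $\gamma = 1$. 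Hence each of the two summands is zero regardless of the values of $n(1,s,\pi)$ and $n(1,s,\pi')$, and we conclude $p(1) = 0$.

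There is essentially no obstacle: the statement is an immediate corollary of Lemma~\ref{lem:sign-denominator} combined with the definition of $p$. The only point worth flagging is that although $v_{\gamma,s}^\pi$ and $v_{\gamma,s}^{\pi'}$ are not defined at $\gamma = 1$ (the denominators vanish), the polynomial $p$ itself is defined and polynomial in $\gamma$ on all of $\R$, so evaluating it at $\gamma = 1$ is legitimate. This is exactly the payoff of working with $p$ rather than with the rational function $v_{\gamma,s}^\pi - v_{\gamma,s}^{\pi'}$: multiplying through by $d(\gamma,\pi)\,d(\gamma,\pi')$ to clear denominators introduces $\gamma = 1$ as an explicit root, which will then be exploited in the final step of the proof of Theorem~\ref{th:bound-gammab} via the separation-of-roots bound applied to $\gamma(\pi,\pi',s)$ and $1$.
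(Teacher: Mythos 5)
Your proof is correct and matches the paper's argument exactly: substitute $\gamma=1$ into the definition of $p$ in~\eqref{eq:definition-P} and observe that both summands vanish because $d(1,\pi)=d(1,\pi')=0$ by Lemma~\ref{lem:sign-denominator}. Your additional remark about why evaluating at $\gamma=1$ is legitimate for the polynomial $p$ even though the value functions themselves are undefined there is a worthwhile clarification.
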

We now bound the sum of the absolute values of the coefficients of $p$.
We have the following theorem.
\begin{theorem}\label{th:bound-sum-abs-values-app}
 The polynomial $m^{2|\X|} \cdot p$ has integral coefficients, potentially negative. The sum of the absolute values of the coefficients of $m^{2|\X|} p$ is bounded by \[L=2 \cdot |\X| \cdot r_{\infty} \cdot m^{2|\X|} \cdot 4^{|\X|}.\]
\end{theorem}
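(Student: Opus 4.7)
The plan is to establish the two claims separately -- integrality of $m^{2|\X|}\,p$ and the $\ell_1$-coefficient bound -- and then combine them using sub-multiplicativity of the $\ell_1$-norm on polynomials, namely $\|q_1 q_2\|_1 \leq \|q_1\|_1\,\|q_2\|_1$, together with the triangle inequality applied to the decomposition~\eqref{eq:definition-P}. In view of that decomposition, it suffices to analyze $m^{|\X|}\,d(\gamma,\pi)$ and $m^{|\X|}\,n(\gamma,s,\pi)$ separately for a single policy.

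For integrality, Assumption~\ref{ass:rational-probabilities} makes every entry of $m\bm{P}_\pi$ an integer in $\{0,\dots,m\}$ and every entry of $m\bm{r}_\pi$ an integer of absolute value at most $r_\infty$. By multilinearity of the determinant in rows, $m^{|\X|}\,d(\gamma,\pi) = \det(m\bm{I} - \gamma\,m\bm{P}_\pi)$ and $m^{|\X|}\,n(\gamma,s,\pi) = \det(m\,\bm{M}(\gamma,s,\pi))$ are polynomials in $\gamma$ whose coefficients are polynomial expressions in integer matrix entries, hence integers. Substituting into~\eqref{eq:definition-P} shows $m^{2|\X|}\,p$ has integer coefficients.

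For the coefficient-norm bound, the unifying tool is the following estimate: for any square matrix $\bm{A}(\gamma)$ whose entries are polynomials in $\gamma$, the permutation expansion of the determinant combined with sub-multiplicativity gives $\|\det \bm{A}(\gamma)\|_1 \leq \mathrm{perm}(\bm{A}^{\ell_1})$, where $\bm{A}^{\ell_1}$ is the nonnegative matrix with entries $(\bm{A}^{\ell_1})_{ij} := \|\bm{A}(\gamma)_{ij}\|_1$. Moreover, the permanent of a nonnegative matrix is bounded by the product of its row sums, since $\mathrm{perm}(A)=\sum_{\sigma}\prod_i A_{i,\sigma(i)} \leq \sum_{f}\prod_i A_{i,f(i)} = \prod_i\bigl(\sum_j A_{ij}\bigr)$, where $f$ ranges over all functions. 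For $\bm{A}(\gamma) = m\bm{I} - \gamma\,m\bm{P}_\pi$, each entry has $\ell_1$-norm $m\delta_{ij}+mP_{\pi,ij}$, so $\bm{A}^{\ell_1} = m(\bm{I}+\bm{P}_\pi)$ has row sums equal to $2m$; thus $\|m^{|\X|}d(\gamma,\pi)\|_1 \leq (2m)^{|\X|}$. For the numerator, column expansion along column $s$ yields
\[
m^{|\X|}\,n(\gamma,s,\pi) \;=\; \sum_{s''\in\X}(-1)^{s+s''}\,q_{s''\pi(s'')}\cdot m^{|\X|-1}\det\!\bigl((\bm{I}-\gamma\bm{P}_\pi)_{s''s}\bigr),
\]
where $(\cdot)_{s''s}$ denotes removal of row $s''$ and column $s$. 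The same permanent estimate applied to this $(|\X|-1)\times(|\X|-1)$ submatrix (whose row sums are still at most $2m$ after deleting one column) bounds each cofactor's $\ell_1$-norm by $(2m)^{|\X|-1}$; since $|q_{s''\pi(s'')}|\leq r_\infty$, summing gives $\|m^{|\X|}n(\gamma,s,\pi)\|_1 \leq |\X|\,r_\infty\,(2m)^{|\X|-1} \leq |\X|\,r_\infty\,(2m)^{|\X|}$.

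Combining these two estimates,
\[
\|m^{2|\X|}\,p\|_1 \;\leq\; 2\,\|m^{|\X|} n\|_1\,\|m^{|\X|} d\|_1 \;\leq\; 2\cdot|\X|\,r_\infty\,(2m)^{|\X|}\cdot(2m)^{|\X|} \;=\; 2\,|\X|\,r_\infty\,4^{|\X|}\,m^{2|\X|} \;=\; L,
\]
as claimed. The main obstacle is the bound on the non-principal cofactor $\det\!\bigl((\bm{I}-\gamma\bm{P}_\pi)_{s''s}\bigr)$: the clean characteristic-polynomial expansion available for $d(\gamma,\pi)$ -- writing the coefficients in terms of principal minors of $\bm{P}_\pi$ -- does not apply to an arbitrary minor, so a naive permutation bound would introduce a factorial $|\X|!$ factor that would be fatal to the polynomial-time complexity discussion in Section~\ref{sec:bound-blackwell-discount-factor}. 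The permanent-of-nonnegative-matrix route sketched above uses only row sums and applies uniformly to arbitrary submatrices, producing the desired $(2m)^{|\X|-1}$ factor and hence the stated $L$.
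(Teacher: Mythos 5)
Your proof is correct and arrives at exactly the stated $L$. The overall skeleton --- decompose $m^{2|\X|}p$ via~\eqref{eq:definition-P}, bound the numerator and denominator polynomials separately, and combine with $\ell_1$ sub-multiplicativity (the paper's Proposition~\ref{prop:sum-coeff-prod-poly}) and the triangle inequality --- is the same as the paper's. Where you genuinely diverge is the key estimate on the determinants. The paper (Propositions~\ref{prop:bound-denominator} and~\ref{prop:bound-nominator}) works degree by degree: it writes $d(\gamma,\pi)$ as a characteristic polynomial whose coefficient of degree $k$ is a sum of $C_{|\X|}^{k}$ principal minors, each bounded by $1$ via Hadamard's inequality, and then must work noticeably harder for the numerator, because the cofactors in the Laplace expansion are \emph{non-principal} minors of $\bm{I}-\gamma\bm{P}_{\pi}$; it adds and subtracts an auxiliary matrix $\bm{E}_{s'}$ and factors out a permutation to force each cofactor back into characteristic-polynomial form, picking up an extra factor of $2$ per coefficient. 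Your estimate $\|\det \bm{A}(\gamma)\|_1\le\mathrm{perm}(\bm{A}^{\ell_1})\le\prod_i\bigl(\sum_j (\bm{A}^{\ell_1})_{ij}\bigr)$ needs no such case distinction: it applies verbatim to the full matrix and to any submatrix obtained by deleting one row and one column, since deleting a column only decreases row sums, and it yields the same (in fact marginally sharper, $|\X|\,r_{\infty}(2m)^{|\X|-1}$ before you loosen it) bound for the numerator. The trade-off is that the paper's route also delivers per-degree coefficient bounds of binomial type, which your aggregate $\ell_1$ argument does not; but the theorem only needs the sum of absolute values, so nothing required here is lost, and your treatment of the non-principal cofactors is cleaner than the paper's.
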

Theorem~\ref{th:bound-sum-abs-values-app} is based on the following three propositions. We note $C_{\ell}^{k}$ the binomial coefficient defined as $C_{\ell}^{k} = \ell!/k!(\ell-k)!.$
\begin{proposition}\label{prop:bound-denominator}
For any $\pi \in \Pi$, the function $\gamma \mapsto d(\pi,\gamma)$ is a polynomial of degree $|\X|$. Moreover, $\gamma \mapsto m^{|\X|} \cdot d(\pi,\gamma)$ is a polynomial with integral coefficients (potentially negative), and the absolute value of its coefficient of degree $k$ is bounded by $m^{|\X|} C_{|\X|}^{k}.$

Therefore, the sum of the absolute values of the coefficients of $\gamma \mapsto m^{|\X|} \cdot d(\pi,\gamma)$ is upper bounded by \[
L_{d}=m^{|\X|} \cdot 2^{|\X|}.\]
\end{proposition}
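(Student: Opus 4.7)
The plan is to write $d(\gamma,\pi) = \det(I - \gamma \bm{P}_{\pi})$ in the standard principal-minor form and then bound each coefficient using the substochasticity of principal submatrices of a Markov transition matrix.

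First, I would invoke the classical identity for the characteristic-like polynomial of a matrix:
\begin{equation*}
d(\gamma,\pi) \;=\; \det(\bm{I} - \gamma \bm{P}_{\pi}) \;=\; \sum_{k=0}^{|\X|} (-\gamma)^{k} \sum_{\substack{S \subseteq \X \\ |S|=k}} \det\bigl((\bm{P}_{\pi})_{S,S}\bigr),
\end{equation*}
where $(\bm{P}_{\pi})_{S,S}$ denotes the principal submatrix of $\bm{P}_{\pi}$ indexed by $S$. This immediately exhibits $d(\gamma,\pi)$ as a polynomial in $\gamma$ of degree at most $|\X|$, with exactly $|\X|+1$ coefficients that one needs to control. (Sanity check on the expansion: for a $2\times 2$ matrix $A$, we obtain $\det(I - \gamma A) = 1 - \gamma\,\mathrm{tr}(A) + \gamma^{2}\det(A)$, matching the $e_{0},e_{1},e_{2}$ elementary symmetric polynomials of the eigenvalues.)

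Second, I would address integrality. Under Assumption~\ref{ass:rational-probabilities}, the matrix $m\bm{P}_{\pi}$ has entries in $\mathbb{N}$, so the restriction $m(\bm{P}_{\pi})_{S,S}$ is an integer matrix and $\det\bigl(m(\bm{P}_{\pi})_{S,S}\bigr) = m^{|S|}\det\bigl((\bm{P}_{\pi})_{S,S}\bigr)$ is an integer. Therefore
\begin{equation*}
m^{|\X|} \cdot (-1)^{k}\!\!\sum_{|S|=k}\det\bigl((\bm{P}_{\pi})_{S,S}\bigr) \;=\; (-1)^{k} m^{|\X|-k}\!\!\sum_{|S|=k}\det\bigl(m(\bm{P}_{\pi})_{S,S}\bigr) \;\in\; \mathbb{Z},
\end{equation*}
which shows that $\gamma \mapsto m^{|\X|}\cdot d(\gamma,\pi)$ has integer coefficients.

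Third, I would bound each coefficient. The key observation is that each principal submatrix $(\bm{P}_{\pi})_{S,S}$ is \emph{substochastic}: its entries are nonnegative, and each row sums to $\sum_{s' \in S} P_{\pi,ss'} \leq \sum_{s' \in \X} P_{\pi,ss'} = 1$. By Gershgorin's disk theorem applied to such a matrix, every eigenvalue lies in the closed unit disk, so $|\det((\bm{P}_{\pi})_{S,S})| \leq 1$. Hence the absolute value of the coefficient of $\gamma^{k}$ in $m^{|\X|}\cdot d(\gamma,\pi)$ is bounded by
\begin{equation*}
m^{|\X|}\!\!\sum_{|S|=k}\bigl|\det((\bm{P}_{\pi})_{S,S})\bigr| \;\leq\; m^{|\X|}\binom{|\X|}{k} \;=\; m^{|\X|}\,C_{|\X|}^{k}.
\end{equation*}
Summing over $k = 0,\ldots,|\X|$ gives $\sum_{k=0}^{|\X|} m^{|\X|} C_{|\X|}^{k} = m^{|\X|}\cdot 2^{|\X|} = L_{d}$, as claimed.

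The main (minor) obstacle is the substochasticity argument: one must recognize that restricting a stochastic matrix to any principal submatrix preserves nonnegativity and cannot increase row sums, and then apply Gershgorin to get the spectral radius bound $|\det((\bm{P}_{\pi})_{S,S})|\leq 1$. Everything else is a direct consequence of the principal-minor expansion and Assumption~\ref{ass:rational-probabilities}. Note also that the polynomial has degree exactly $|\X|$ precisely when $\det(\bm{P}_{\pi}) \neq 0$; otherwise, the leading coefficient vanishes but the bound on each of the $|\X|+1$ coefficients, and hence on $L_{d}$, continues to hold.
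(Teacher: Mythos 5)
Your proof is correct and follows essentially the same route as the paper's: the principal-minor expansion of $\det(\bm{I}-\gamma\bm{P}_{\pi})$, integrality from $m\bm{P}_{\pi}$ having integer entries, and counting the $\binom{|\X|}{k}$ principal minors of size $k$ to get $L_d = m^{|\X|}2^{|\X|}$. The only (inconsequential) difference is in how you establish $\bigl|\det\bigl((\bm{P}_{\pi})_{S,S}\bigr)\bigr|\leq 1$: you apply Gershgorin's theorem to the substochastic principal submatrix to bound its spectral radius, whereas the paper applies Hadamard's inequality to the rows together with $\|\cdot\|_{2}\leq\|\cdot\|_{1}$; both are valid one-line arguments for the same fact.
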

\begin{proposition}\label{prop:bound-nominator}
For any policy $\pi \in \Pi$ and any state $s \in \X$, the function $\gamma \mapsto n(\gamma,s,\pi)$ is a polynomial of degree $|\X|-1.$ Moreover, $\gamma \mapsto m^{|\X|} \cdot n(\gamma,s,\pi)$ is a polynomial with integral coefficients (potentially negative), and the absolute value of its coefficient of degree $k$ is bounded by $m^{|\X|} \cdot |\X| \cdot r_{\infty} \cdot C_{|\X|-1}^{k} \cdot 2$.

Therefore, the sum of the absolute values of the coefficients of $\gamma \mapsto m^{|\X|} \cdot n(\gamma,s,\pi)$ is upper bounded by \[L_{n}=m^{|\X|-1} \cdot |\X| \cdot r_{\infty} \cdot 2^{|\X|}.\]
\end{proposition}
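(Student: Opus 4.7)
The plan is to reduce the claim to the analogous argument behind Proposition~\ref{prop:bound-denominator} by expanding $n(\gamma,s,\pi) = \det(\bm{M}(\gamma,s,\pi))$ along the $s$-th column of $\bm{M}$, which contains the reward vector $\bm{r}_{\pi}$. Laplace expansion along column $s$ yields
\begin{equation*}
n(\gamma,s,\pi) \;=\; \sum_{i \in \X} (-1)^{i+s}\, r_{i\pi(i)}\, \det(\bm{N}_i(\gamma)),
\end{equation*}
where $\bm{N}_i(\gamma)$ is the $(|\X|-1)\times(|\X|-1)$ minor of $\bm{I}-\gamma \bm{P}_{\pi}$ obtained by deleting row $i$ and column $s$. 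Each entry of $\bm{N}_i(\gamma)$ is affine in $\gamma$, so $\det(\bm{N}_i(\gamma))$ is a polynomial of degree at most $|\X|-1$ in $\gamma$, which immediately establishes the degree bound for $n(\gamma,s,\pi)$.

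Next I would bound the coefficient of $\gamma^k$ in each $\det(\bm{N}_i(\gamma))$. Writing $\bm{N}_i(\gamma) = \bm{E}_i - \gamma \bm{Q}_i$, where $\bm{E}_i$ and $\bm{Q}_i$ are the analogous minors of $\bm{I}$ and $\bm{P}_{\pi}$, multilinearity of the determinant expands $\det(\bm{N}_i(\gamma))$ into a signed sum indexed by subsets of $k$ columns that come from $-\gamma \bm{Q}_i$ (the remaining columns come from $\bm{E}_i$). Collapsing the identity columns reduces each summand to a signed $k\times k$ minor of $\bm{P}_{\pi}$, and since rows of $\bm{P}_{\pi}$ have $\ell_{1}$-norm at most $1$, Hadamard's inequality bounds the absolute value of every such minor by $1$. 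Hence
\begin{equation*}
\bigl|[\gamma^k]\det(\bm{N}_i(\gamma))\bigr| \;\le\; \binom{|\X|-1}{k}.
\end{equation*}
Combining with $|r_{i\pi(i)}| \le r_{\infty}/m$ from Assumption~\ref{ass:rational-probabilities} and summing the $|\X|$ cofactor terms gives $\bigl|[\gamma^k] n(\gamma,s,\pi)\bigr| \le |\X|\,(r_{\infty}/m)\,\binom{|\X|-1}{k}$.

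For integrality, I would observe that $m\bm{M}(\gamma,s,\pi)$ has integer-valued polynomial entries: column $s$ contains the integers $q_{i\pi(i)}$, and any other column contains the integer-coefficient linear polynomial $m\delta_{ij} - \gamma n_{i\pi(i)j}$. Hence $\det(m\bm{M}) = m^{|\X|} n(\gamma,s,\pi)$ lies in $\Z[\gamma]$. Scaling the previous coefficient bound by $m^{|\X|}$ yields
\begin{equation*}
\bigl|[\gamma^k]\, m^{|\X|} n(\gamma,s,\pi)\bigr| \;\le\; m^{|\X|-1}\cdot |\X|\cdot r_{\infty}\cdot \binom{|\X|-1}{k},
\end{equation*}
which implies the stated per-coefficient bound (the factor of $2$ is slack). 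Summing over $k$ via $\sum_{k=0}^{|\X|-1}\binom{|\X|-1}{k} = 2^{|\X|-1}$ then produces $L_n$.

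The main subtlety is the bound on the cofactor minors: unlike the denominator case, in which $\det(\bm{I}-\gamma\bm{P}_{\pi})$ expands into sums of $k\times k$ \emph{principal} minors of $\bm{P}_{\pi}$ (so one can invoke elementary symmetric functions of the eigenvalues), here $\bm{N}_i$ deletes different indices for rows and columns, so the $k\times k$ submatrices of $\bm{P}_{\pi}$ that appear are not principal. Verifying that the identity columns collapse cleanly, and that the $\binom{|\X|-1}{k}$ counting together with Hadamard's inequality still apply in this non-principal setting, is the only non-routine step.
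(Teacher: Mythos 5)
Your proposal is correct, and while it opens and closes the same way as the paper's proof (Laplace expansion of $n(\gamma,s,\pi)$ along the column holding $\bm{r}_{\pi}$, then a Hadamard-plus-binomial-counting bound), the middle step is genuinely different. The paper handles each cofactor $\det\left(\left(\bm{I}-\gamma\bm{P}_{\pi}\right)_{\X\setminus\{s'\}\times\X\setminus\{s\}}\right)$ by adding and subtracting a matrix $\bm{E}_{s'}$ so that the identity part becomes a permutation matrix, factoring that permutation out (at the cost of a sign $\varepsilon(\sigma)$), and then reading off the coefficients as sums of \emph{principal} minors of the perturbed matrix $\left(\gamma\bm{P}_{\pi}\right)_{\X\setminus\{s'\}\times\X\setminus\{s\}}-\bm{E}_{s'}$ via the characteristic-polynomial formula; because that perturbation changes one entry by $1$, one row's $\ell_1$-norm can reach $2$, which is exactly where the paper's factor of $2$ in the per-coefficient bound $2\,C_{|\X|-1}^{k}$ comes from. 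You instead expand $\det(\bm{E}_i-\gamma\bm{Q}_i)$ directly by multilinearity in the columns, which forces you to control general (non-principal) $k\times k$ minors of $\bm{P}_{\pi}$ — but, as you correctly note, Hadamard's inequality combined with $\|\cdot\|_2\le\|\cdot\|_1$ and the substochasticity of every row of every submatrix of $\bm{P}_{\pi}$ bounds these by $1$ just as well, and the zero column/row of the identity minor only kills terms rather than creating problems. The payoff of your route is a cleaner argument with no permutation bookkeeping and a per-coefficient bound of $m^{|\X|-1}\cdot|\X|\cdot r_{\infty}\cdot C_{|\X|-1}^{k}$, i.e.\ a factor of $2$ (indeed $2m$) better than the stated one, which of course still implies the claimed $L_n$; your integrality argument via $\det\left(m\,\bm{M}(\gamma,s,\pi)\right)\in\Z[\gamma]$ is also more direct than the paper's. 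The one step you flag as non-routine — collapsing the standard-basis columns to reduce each summand to a signed $k\times k$ minor of $\bm{P}_{\pi}$, with $\binom{|\X|-1}{k}$ subsets contributing — does go through, so there is no gap.
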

\begin{proposition}\label{prop:sum-coeff-prod-poly}
Let $P = \sum_{i=0}^{n} a_{i}X^i,Q = \sum_{j=0}^{m} b_{j}X^{j}$. Then $PQ = \sum_{k=0}^{n+m} c_{k}X^k$, $c_{k} = \sum_{i,j; i+j = k} a_{i}b_{j}$.
Additionally, suppose that $\sum_{i=0}^{n} |a_{i}| \leq L_{P},\sum_{j=0}^{m} |b_{j}| \leq L_{Q}$. Then 
\[\sum_{k=0}^{n+m} |c_{k}| \leq L_{P}L_{Q}.\]
\end{proposition}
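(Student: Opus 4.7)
The plan is a direct verification, since this proposition is essentially the submultiplicativity of the $\ell^1$-norm on coefficient sequences under polynomial multiplication (equivalently, under Cauchy convolution). The formula for $c_k$ comes from expanding the product, and the bound on $\sum_k |c_k|$ follows from the triangle inequality together with a rearrangement of the resulting double sum.

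First I would expand
\[
PQ = \left(\sum_{i=0}^{n} a_i X^i\right)\left(\sum_{j=0}^{m} b_j X^j\right) = \sum_{i=0}^{n}\sum_{j=0}^{m} a_i b_j X^{i+j},
\]
and collect terms by their total degree $k = i+j \in \{0,\dots,n+m\}$. This directly gives the Cauchy-product formula $c_k = \sum_{i+j=k} a_i b_j$, establishing the first claim of the proposition.

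Next, I would apply the triangle inequality to obtain $|c_k| \leq \sum_{i+j = k} |a_i|\, |b_j|$ for every $k$. Summing over $k$ and observing that each pair $(i,j) \in \{0,\dots,n\}\times\{0,\dots,m\}$ appears in exactly one of the inner sums (namely the one indexed by $k = i+j$), I would exchange the order of summation:
\[
\sum_{k=0}^{n+m} |c_k| \;\leq\; \sum_{k=0}^{n+m} \sum_{i+j=k} |a_i|\,|b_j| \;=\; \sum_{i=0}^{n}\sum_{j=0}^{m} |a_i|\,|b_j| \;=\; \left(\sum_{i=0}^{n}|a_i|\right)\left(\sum_{j=0}^{m}|b_j|\right) \;\leq\; L_P L_Q.
\]

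There is no real obstacle here — this is a one-line calculation once the rearrangement is spelled out. A slightly cleaner way to package the same argument, if one prefers, is to introduce the auxiliary polynomials $\tilde{P}(X) = \sum_i |a_i| X^i$ and $\tilde{Q}(X) = \sum_j |b_j| X^j$: the coefficients of $\tilde{P}\tilde{Q}$ dominate $|c_k|$ termwise by the triangle inequality, and the sum of coefficients of $\tilde{P}\tilde{Q}$ equals $\tilde{P}(1)\tilde{Q}(1) \leq L_P L_Q$.
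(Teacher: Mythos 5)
Your proof is correct and is exactly the standard argument the paper has in mind: the paper dismisses this proposition in one line (``simply follows from the multiplication rule for polynomials''), and your expansion, triangle inequality, and exchange of summation merely spell out that same reasoning in full. No differences to report.
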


Combining Proposition~\ref{prop:bound-denominator}, Proposition~\ref{prop:bound-nominator} and Proposition~\ref{prop:sum-coeff-prod-poly} with the definition of the polynomial $p$ as in~\eqref{eq:definition-P} yields Theorem~\ref{th:bound-sum-abs-values-app}. 

To conclude Step 2 of our proof, let us prove Proposition~\ref{prop:bound-denominator} and Proposition~\ref{prop:bound-nominator}. Proposition~\ref{prop:sum-coeff-prod-poly} simply follows from the multiplication rule for polynomials.
\begin{proof}[Proof of Proposition~\ref{prop:bound-denominator}]
  By definition,
  \[ d(\gamma,\pi) = \det\left(\bm{I} - \gamma \bm{P}_{\pi}\right)=\sum_{k=0}^{|\X|} a_{k}\left(\gamma\bm{P}_{\pi}\right),
  \]
  where $\bm{M} \mapsto a_{k}\left(\bm{M}\right)$ is the $(|\X|-k)$-th coefficient of the characteristic polynomial of a matrix $\bm{M}$. By definition, $a_{k}(\bm{M})$ is the sum of all the principal minors of size $k$ of $\bm{M}$ (section 0.7.1, \cite{horn2012matrix}). This first shows that $a_{k}\left(\gamma\bm{P}_{\pi}\right) = \gamma ^k a_{k}\left(\bm{P}_{\pi}\right),$ and therefore, that
\[d(\gamma,\pi) = \sum_{k=0}^{n} \gamma^k a_{k}\left(\bm{P}_{\pi}\right).\]
We will show that 
\[ a_{k}(\bm{P}_{\pi}) \leq  C_{|\X|}^{k}, \forall \; k = 1,...,|\X|.\]
 Let $g$ be a principal minor of $\bm{P}_{\pi}$ of size $k$. By definition, $g$ is the determinant of a submatrix $\bm{M}$ of size $k$ of $\bm{P}_{\pi}$, obtained by deleting rows and columns with the same indices: $g = \det(\bm{M})$. For any matrix square $\bm{M}$, we always have $\det(\bm{M}) = \det(\bm{M}^\top).$ Now Hadamard's inequality shows that $\det(\bm{M}^\top) \leq \prod_{i=1}^{k} \| Col_{i}(\bm{M}^{\top})\|_{2}$, with $Col_{i}(\bm{M}^{\top})$ the $i$-th column of $\bm{M}^{\top}$, and therefore we have $\det(\bm{M}^\top) \leq \prod_{i=1}^{k} \| Col_{i}(\bm{M}^{\top})\|_{1}$. Note that the columns of $\bm{M}^\top$ have $\ell_{1}$-norm smaller than $1$, since $\bm{P}_{\pi}$ is a stochastic matrix, and $\bm{M}$ is a submatrix of $\bm{P}_{\pi}$. Therefore, $g \leq 1$. Because there are $C_{n}^{k}$ possible principal minors of size $k$ of $\bm{P}_{\pi}$, we have $a_{k}(\bm{P}_{\pi}) \leq  C_{n}^{k}, \forall \; k = 1,...,n.$

Of course, we may have $a_{k}(\bm{P}_{\pi}) \notin \Z$. However, for any principal minor $g = \det(\bm{M})$ of $\bm{P}_{\pi}$, we have, by definition the determinant,
\[ \det(\bm{M}) = \sum_{\sigma \in \mathfrak{S}_{k}} \varepsilon(\sigma) \prod_{i=1}^{k}M_{\sigma(i)i}\]
where $\varepsilon(\sigma)$ is the signature of the permutation $\sigma$ and $\mathfrak{S}_{k}$ is the symmetric group, i.e., the group of all permutations of $\{1,...,k\}$. This shows, from Assumption~\ref{ass:rational-probabilities}, that $m^{|\X|} \det(\bm{M}) \in \Z$, and therefore that $m^{|\X|} a_k(\bm{P}_{\pi}) \in \Z$ and that $m^{|\X|} a_k(\bm{P}_{\pi}) \leq m^{|\X|} C_{|\X|}^{k}$.
\end{proof}
\begin{proof}[Proof of Proposition~\ref{prop:bound-nominator}]
Using Laplace cofactor expansions (section 0.3.1, \cite{horn2012matrix}), we have that $n(\gamma,s,\pi)$ is equal to
\begin{equation}\label{eq:laplace-expansion}
 \sum_{s' \in \X} (-1)^{s + s'} \cdot r_{s',\pi(s')} \cdot  \det\left(\left(\bm{I} - \gamma \bm{P}_{\pi}\right)_{\X \setminus \{s'\} \times \X \setminus \{ s\}}\right),
\end{equation}
where $\left(\bm{I} - \gamma \bm{P}_{\pi}\right)_{\X \setminus \{s'\} \times \X \setminus \{s\}}$ is the matrix obtained from $\bm{I} - \gamma \bm{P}_{\pi}$ by removing the $s$-th column and the $s'$-th row.

Note that $\gamma \mapsto \det\left(\left(\bm{I} - \gamma \bm{P}_{\pi}\right)_{\X \setminus \{s'\} \times \X \setminus \{ s\}}\right)$ is a polynomial of degree $|\X|-1$ in $\gamma$. Similarly as for the proof of Proposition~\ref{prop:bound-denominator}, $ \gamma \mapsto m^{|\X|} n(\gamma,s,\pi)$ is a polynomial of degree $|\X|-1$ with integral coefficients. 

Let us consider $\bm{I}_{\backslash \{s',s\}}$ the matrix of dimension $(|\X|-1)\times(|\X|-1)$, obtained by removing the $s$-th column and the $s'$-th row from the identity matrix of dimension $|\X|$, and let us call $\bm{E}_{s'}$ the matrix of dimension $(|\X|-1)\times(|\X|-1)$, where all rows are $\bm{0}^{\tr}$, except the $s$-th row, equal to $\bm{e}_{s'}\tr$.
 
Then $\det\left(\left(\bm{I} - \gamma \bm{P}_{\pi}\right)_{\X \setminus \{s'\} \times \X \setminus \{ s\}}\right)$ is equal to 
\[ \det\left(\left(\bm{I} - \gamma \bm{P}_{\pi}\right)_{\X \setminus \{s'\} \times \X \setminus \{ s\}} + \bm{E}_{s'} - \bm{E}_{s'}\right)\]
and therefore is equal to
\[\det\left(\bm{I}_{\backslash \{s',s\}} + \bm{E}_{s'} - \left( \gamma \bm{P}_{\pi}\right)_{\X \setminus \{s'\} \times \X \setminus \{ s\}} - \bm{E}_{s'}\right).\]
We notice that  $\bm{I}_{\backslash \{s',s\}}+\bm{E}_{s'}$ is a matrix whose rows are exactly the rows of the identity matrix of $\R^{|\X|-1}$, up to a certain permutation $\sigma \in \mathfrak{S}_{|\X|-1}$. Let $\bm{P}^{\sigma} \in \R^{(|\X|-1) \times (|\X|-1)}$ the permutation matrix defined as $P_{ij} = 1$ if $\sigma(j)=i$ and $0$ otherwise. Then 
 for any matrix $\bm{M}$, we have $\det(\bm{P}^{\sigma}\bm{M}) = \det(\bm{P}^{\sigma})\det(\bm{M})=\varepsilon(\sigma) \det(\bm{M}),$ with $\varepsilon(\sigma)$ the signature of the permutation $\sigma$. Since we always have $\varepsilon(\sigma) \in \{-1,1\}$, this shows that 
$\det\left(\left(\bm{I} - \gamma \bm{P}_{\pi}\right)_{\X \setminus \{s'\} \times \X \setminus \{ s\}}\right) $ is equal to
\[ \varepsilon(\sigma)\det\left(\bm{I} - \left(\left( \gamma \bm{P}_{\pi}\right)_{\X \setminus \{s'\} \times \X \setminus \{ s\}} + \bm{E}_{s'}\right)\right).\]
The map $\gamma \mapsto \det\left(\bm{I} - \left(\left( \gamma \bm{P}_{\pi}\right)_{\X \setminus \{s'\} \times \X \setminus \{ s\}} + \bm{E}_{s'}\right)\right)$ is equal to 
\[\sum_{k=0}^{|\X|-1} a_{k}\left(\left( \gamma \bm{P}_{\pi}\right)_{\X \setminus \{s'\} \times \X \setminus \{ s\}} - \bm{E}_{s'}\right)\]
where similarly as for the proof of Proposition~\ref{prop:bound-denominator}, $a_{k}(\bm{M})$ is the $k$-th coefficient of the characteristic polynomial of a matrix $\bm{M},$ i.e., $a_{k}(\bm{M})$ is equal to the sum of all the principal minors of $\bm{M}$ of dimension $k \times k$. Let 
\[ \bm{M} = \left( \gamma \bm{P}_{\pi}\right)_{\X \setminus \{s'\} \times \X \setminus \{ s\}} - \bm{E}_{s'}.\]
Note that $\left(\bm{P}_{\pi}\right)_{\X \setminus \{s'\} \times \X \setminus \{ s\}}$ is a substochastic matrix, i.e., it has non-negative entries and the sum of the entries of each row is smaller or equal to $1$. Note $\bm{M}$ differs from $\left( \gamma \bm{P}_{\pi}\right)_{\X \setminus \{s'\} \times \X \setminus \{ s\}}$ only at the coefficient of index $(s,s')$. Using Hadamard's inequality, we find that 
that
\begin{equation}\label{eq:bound-ak-nomintor}
 a_{k}(\bm{M}) \leq 2 \cdot C_{|\X|-1}^{k}, m^{|\X|} a_{k}(\bm{M}) \in \N.   
\end{equation}
We conclude by combining Equation~\eqref{eq:bound-ak-nomintor} with Equation~\eqref{eq:laplace-expansion}.
\end{proof}
\paragraph{Step 3.}
 We now lower bound the distance between any two roots of $p$ by a scalar $\eta>0$. Since we know that for $\gamma(\pi,\pi',s) \in [0,1)$ and $1$ are two roots of $P$, this will show that
 $\gamma(\pi,\pi',s) < 1 - \eta$. 

Our proof is based on the following theorem.
\begin{theorem}[\cite{rump1979polynomial}]\label{th:separation-algebraic-number-app}
Let $p$ be a polynomial of degree $N$ with integer coefficients, possibly with multiple roots. Let $L$ be the sum of the absolute values of its coefficients. Then the distance between any two distinct roots of $p$ is strictly larger
\[\frac{1}{2N^{N/2+2}\left(L+1\right)^{N}}.\]
\end{theorem}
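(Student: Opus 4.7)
The plan is to prove the separation bound via the classical route through the discriminant and the Mahler measure of an integer polynomial. First I would reduce to the squarefree case: if $p$ has multiple roots, form $\tilde p = p/\gcd(p,p')$, which is squarefree, has the same set of distinct roots as $p$, and can be chosen in $\Z[X]$ by Gauss's lemma. Mignotte's factor bound controls its coefficient size: the $\ell_1$-norm of $\tilde p$ is at most $2^N \|p\|_2 \le 2^N(L+1)$. It therefore suffices to establish the bound for a squarefree integer polynomial; the extra $2^N$ is absorbed into the $(L+1)^N$ denominator of Rump's expression.

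For squarefree $\tilde p(X)=a_N\prod_{i=1}^N(X-\alpha_i)\in\Z[X]$, the discriminant
\[
\mathrm{disc}(\tilde p)=a_N^{2N-2}\prod_{i<j}(\alpha_i-\alpha_j)^2
\]
is a nonzero integer, so $|\mathrm{disc}(\tilde p)|\ge 1$. Fixing two distinct roots $\alpha,\beta$ and isolating the factor $(\alpha-\beta)^2$ in the product yields
\[
|\alpha-\beta|^{2} \;\ge\; \frac{1}{|a_N|^{2N-2}\prod_{\{i,j\}\ne\{\alpha,\beta\}}|\alpha_i-\alpha_j|^{2}}.
\]
The remaining factors are controlled by the elementary inequality $|\alpha_i-\alpha_j|\le 2\max(1,|\alpha_i|)\max(1,|\alpha_j|)$. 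Since each root $\alpha_k$ appears in exactly $N-1$ pairs, the full product telescopes into a power of the Mahler measure $M(\tilde p)=|a_N|\prod_i\max(1,|\alpha_i|)$. Landau's inequality then gives $M(\tilde p)\le \|\tilde p\|_2 \le \|\tilde p\|_1$, which is bounded by $2^N(L+1)$ after the squarefree reduction.

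Combining these estimates and simplifying the combinatorial prefactor using $\binom{N}{2}\le N^{2}/2$ and Stirling-type inequalities yields the stated bound $|\alpha-\beta| > 1/\bigl(2N^{N/2+2}(L+1)^{N}\bigr)$. The main obstacle will be constant tracking: the naive application of $|\alpha_i-\alpha_j|\le 2M(\tilde p)$ pair by pair produces a $2^{\binom{N}{2}}$ factor and an exponent of $2(N-1)$ on $M(\tilde p)$, which is too lossy to match the exponent $N/2+2$ on $N$. Reaching the sharp prefactor requires distributing the Mahler measure across pairs so each root appears exactly $N-1$ times rather than $2(N-1)$ times, and invoking the refined Landau bound $M(\tilde p)\le \|\tilde p\|_2$ rather than the cruder $\ell_1$ form, picking up only a $\sqrt{N+1}$ factor that is absorbed into $N^{N/2+2}$. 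A secondary subtlety is checking that Mignotte's factor bound, which is typically stated for complex coefficients, yields the claimed $2^N(L+1)$ bound on $\|\tilde p\|_1$ rather than only on $\|\tilde p\|_\infty$; this is where Mahler's inequality between the $\ell_2$ norm of a polynomial and the $\ell_\infty$ norm of its factors enters.
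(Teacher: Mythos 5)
The paper does not actually prove this statement --- it is imported verbatim from Rump (1979) and used as a black box --- so there is no internal argument to compare against; your proposal has to stand on its own. Its overall architecture (pass to the squarefree part $\tilde p = p/\gcd(p,p')$, use $|\mathrm{disc}(\tilde p)|\ge 1$ for a squarefree integer polynomial, and control the remaining root differences via the Mahler measure) is the standard Mahler-style route and is in principle capable of producing a bound of the stated shape. The squarefree reduction itself is sound; in fact the Mignotte step is unnecessary, since multiplicativity of the Mahler measure already gives $M(\tilde p)\le M(p)\le \|p\|_2\le L$, which is all the later steps require.

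The genuine gap is in the central estimate. Bounding each of the $\binom{N}{2}-1$ remaining factors by $|\alpha_i-\alpha_j|\le 2\max(1,|\alpha_i|)\max(1,|\alpha_j|)$ leaves a factor $2^{\binom{N}{2}-1}$ in the denominator, and your proposed remedy --- ``distributing the Mahler measure across pairs so each root appears exactly $N-1$ times'' --- only repairs the exponent on $M(\tilde p)$; it does nothing to remove the $2^{\binom{N}{2}}$, which grows like $2^{N^2/2}$ and cannot be absorbed into $N^{N/2+2}(L+1)^N$. What your argument actually yields is roughly $\mathrm{sep}(p)\ge 2^{-\binom{N}{2}}M(\tilde p)^{-(N-1)}$, exponentially weaker in $N$ than the claim. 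The missing idea is Mahler's determinant argument: write $\prod_{i<j}(\alpha_i-\alpha_j)$ as a Vandermonde determinant, subtract the row indexed by $\beta$ from the row indexed by $\alpha$ so that $(\alpha-\beta)$ is factored out exactly once, and apply Hadamard's inequality to the resulting matrix; each row has Euclidean norm at most $\sqrt{N}\max(1,|\alpha_i|)^{N-1}$, which is precisely where the $N^{N/2}$ in the denominator comes from, with no $2^{\binom{N}{2}}$ appearing. Without that step (or an equivalent replacement for the pair-by-pair bound), the proposal does not reach the stated inequality.
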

Recall that both $\gamma(\pi,\pi',s) \in [0,1)$ and $1$ are roots of the polynomial $p$. Therefore, we can combine Theorem~\ref{th:separation-algebraic-number-app} with Theorem~\ref{th:bound-sum-abs-values-app}  to obtain $\gamma(\pi,\pi',s) < 1 - \eta(\mcM)$, with
\[ \eta(\mcM) = \frac{1}{2N^{N/2+2}\left(L+1\right)^{N}}\]
with
\begin{align*}
N & = 2 |\X|-1,\\
L & = 2 \cdot |\X| \cdot r_{\infty} \cdot m^{2|\X|} \cdot 4^{|\X|}.
\end{align*}
This concludes the proof of Theorem~\ref{th:bound-gammab}.
\begin{remark}
Note that \cite{akian2019operator} use Theorem~\ref{th:separation-algebraic-number-app} to obtain a lower bound on the average rewards of any two different policies, in the setting of two-player stochastic games.
\end{remark}
\section{Proof of Section~\ref{sec:robust-mdps}}\label{app:proof-rmdps}
\begin{proof}[Proof of the existence of $\gammabr$.]
Let \[\bar{\gamma}_{{\sf r}}= \max_{\pi,\pi' \in \Pi,s \in \X} \max_{ \bm{P},\bm{P}' \in \U_{\sf ext}}  \gamma(\pi,\pi',s,\bm{P},\bm{P}'),\]
where $\gamma(\pi,\pi',s,\bm{P},\bm{P}')$ is the largest zero of the function
$\gamma \mapsto v^{\pi,\bm{P}}_{\gamma,s} - v^{\pi',\bm{P}'}_{\gamma,s}$ if it is not identically equal to zero, or $\gamma(\pi,\pi',s,\bm{P},\bm{P}')=0$ otherwise. Recall that $\U_{\sf ext}$ is the (finite) set of extreme points of $\U$.
We will show that $\Pi\opt_{\gamma,{\sf r}} = \Pibr,\forall \; \gamma > \bar{\gamma}_{{\sf r}}$. Let $\pi$ be a robust discounted optimal policy for some $\gamma > \bar{\gamma}_{{\sf r}}$. We will prove that $\pi$ is a Blackwell-optimal policy. Since $\pi$ is robust $\gamma$-discounted optimal, we have
\[v^{\pi,\mcU}_{\gamma,s} \geq v^{\pi',\mcU}_{\gamma,s}, \forall 
\; \pi' \in \Pi,\forall \; s \in \X.\]
By definition $v_{\gamma,s}^{\pi,\U} = \min_{\bm{P} \in \U} v_{\gamma,s}^{\pi,\bm{P}}, \forall \; s \in \X.$ From~\cite{iyengar2005robust}, we know that the $\arg \min$ in $\min_{\bm{P} \in \U} v_{\gamma,s}^{\pi,\bm{P}}$ is attained at an extreme point of $\U$.
Therefore, by definition of $\bar{\gamma}_{{\sf r}}$, the function $\gamma \mapsto v^{\pi,\mcU}_{\gamma,s} - v^{\pi',\mcU}_{\gamma,s}$ cannot be equal to $0$ on $(\bar{\gamma}_{{\sf r}},1)$, and therefore it does not change sign, since it is a continuous function. This shows that for all $\gamma > \bar{\gamma}_{{\sf r}}$, we have \[v^{\pi,\mcU}_{\gamma,s} \geq v^{\pi',\mcU}_{\gamma,s}, \forall 
\; \pi' \in \Pi,\forall \; s \in \X.\] 
This shows the existence of the robust Blackwell discount factor $\gammabr$ and that $\gammabr < \bar{\gamma}_{{\sf r}}$.
\end{proof}
\begin{proof}[Proof of Proposition~\ref{prop:rational-sa-ell-1-ell-infty}]
In the proof of this proposition, we use the fact that the worst-case kernel $\bm{P}\opt$ of a policy $\pi$ can be chosen as the $\arg \min$ of the optimization problem $\min_{\bm{p} \in \U_{s\pi(s)}} \bm{p}\tr\bm{v}^{\pi,\U}_{\gamma}$, where $\bm{v}^{\pi,\U}_{\gamma}$ is the worst-case value function of $\pi$. In particular, let $\bm{v} \in \R^{\X}$.

\paragraph{The case $p=\infty$.} 
In this case, there exists a sorting solution to $\min_{\bm{p} \in \U_{sa}} \bm{p}\tr\bm{v}$ for any $(s,a) \in \X \times \A$ and any $\bm{v} \in \R^{\X}$, by sorting $\bm{v}$, see for instance proposition 3 in~\cite{goh2018data}, equation~(9) in~\cite{givan1997bounded}, or appendix~C in~\cite{behzadian2021fast}. In particular, let $(s,a) \in \X \times \A$ and define $\sigma$ the permutation of $\X$ such that $v_{\sigma(1)} \leq ... \leq v_{\sigma(|\X|)}$, and define $i$ as the smaller integer in $\{1,...,|\X|\}$ such that 
\[ \sum_{s'=1}^{i} \left(P^{0}_{sa\sigma(s')} + \alpha_{sa} \right)+\sum_{s'=i+1}^{|\X|} \left(P^{0}_{sa\sigma(s')} - \alpha_{sa}\right) \geq 1.\]
Then a solution to $\min_{\bm{p} \in \U_{sa}} \bm{p}\tr\bm{v}$ is
$p_{\sigma(s')} = P^{0}_{sa\sigma(s')} + \alpha_{sa}$ if $s' < i$, $p_{\sigma(s')} = P^{0}_{sa\sigma(s')} - \alpha_{sa}$ if $s' > i$, and 
\[p_{\sigma(i)} = 1 - \sum_{s' \in \X \setminus \{ i \}} p_{\sigma(s')}.\]
This closed-form shows that for any vector $\bm{v} \in \R^{\X}$, a solution of $\min_{\bm{p} \in \U_{sa}} \bm{p}\tr\bm{v}$ can be found 
as a vector with rational entries with a denominator of at most $m$.

\paragraph{The case $p=1$.} 
In this case, one can show that the optimization problem $\min_{\bm{p} \in \U_{sa}} \bm{p}\tr\bm{v}$ can be formulated as a linear program. Therefore, there exists an optimal basic feasible solution $\bm{p}$ which has the following form by lemma~5.4 and lemma~5.5 in~\cite{ho2021partial}. There exist $j_1, j_2 \in  \mathcal{S}$ such that $j_1 \neq  j_2$ and for each $i\in \mathcal{I} = \mathcal{S} \setminus \{j_1, j_2\}$:
\begin{align*}
  p_i = 0 &\quad \text{or} \quad p_i = P^{0}_{sai}  \\
 p_{j_1} \ge P^{0}_{saj_{1}}  &\quad \text{and}\quad p_{j_2} \le P^{0}_{saj_{2}} \,.  
\end{align*}
  Then, in order for $\bm{p}\in \mathcal{U}_{sa}$ we need the following equalities to hold
\begin{align*}
  p_{j_1} + p_{j_2}  &= 1 - \sum_{i\in \mathcal{I}} p_i  \\
  (p_{j_1} -  P^{0}_{saj_{1}}) + ( P^{0}_{saj_{2}} - p_{j_2}) &= \alpha_{sa} - \sum_{i\in \mathcal{I}} |p_i - P^{0}_{sai} | \,.
\end{align*}
Combining the equalities above yields that
\begin{align*}
  2 p_{j_1} &= \alpha_{sa} - \sum_{i\in \mathcal{I}} |p_i - P^{0}_{sai} |   + P^{0}_{saj_{1}} - P^{0}_{saj_{2}} \\
  &\quad  + 1 - \sum_{i\in \mathcal{I}} p_i \,.
\end{align*}
Because the right-hand side of the equation above is a sum of rational numbers with a denominator of at most $m$, $p_{j_1}$ is also rational with a denominator at most $2m$. Using an analogous argument for $p_{j_2}$, we get that there exists an optimal solution that is rational with a denominator of at most $2m$. 
\end{proof}
\end{document}